\newcommand{\cheng}{\color{red}}
\begin{document}
\title{Defense Against Model Extraction Attacks on Recommender Systems}

\author{Sixiao Zhang}
\email{sixiao001@e.ntu.edu.sg}
\affiliation{%
  \institution{Nanyang Technological University}
  \country{Singapore}
}

\author{Hongzhi Yin}
\authornote{Co-corresponding authors.}
\email{h.yin1@uq.edu.au}
\affiliation{%
  \institution{The University of Queensland}
  \city{Brisbane}
  \country{Australia}
}

\author{Hongxu Chen}
\email{hongxu.chen@uq.edu.au}
\affiliation{%
  \institution{The University of Queensland}
  \city{Brisbane}
  \country{Australia}}

\author{Cheng Long}
\authornotemark[1]
\email{c.long@ntu.edu.sg}
\affiliation{%
  \institution{Nanyang Technological University}
  \country{Singapore}
}

\renewcommand{\shortauthors}{Zhang, et al.}

\begin{abstract}

The robustness of recommender systems has become a prominent topic within the research community. Numerous adversarial attacks have been proposed, but most of them rely on extensive prior knowledge, such as all the white-box attacks or most of the black-box attacks which assume that certain external knowledge is available. Among these attacks, the model extraction attack stands out as a promising and practical method, involving training a surrogate model by repeatedly querying the target model. However, there is a significant gap in the existing literature when it comes to defending against model extraction attacks on recommender systems. In this paper, we introduce Gradient-based Ranking Optimization (GRO), which is the first defense strategy designed to counter such attacks. We formalize the defense as an optimization problem, aiming to minimize the loss of the protected target model while maximizing the loss of the attacker's surrogate model. Since top-k ranking lists are non-differentiable, we transform them into swap matrices which are instead differentiable. These swap matrices serve as input to a student model that emulates the surrogate model's behavior. By back-propagating the loss of the student model, we obtain gradients for the swap matrices. These gradients are used to compute a swap loss, which maximizes the loss of the student model. We conducted experiments on three benchmark datasets to evaluate the performance of GRO, and the results demonstrate its superior effectiveness in defending against model extraction attacks.
\end{abstract}

\begin{CCSXML}
<ccs2012>
   <concept>
       <concept_id>10002951.10003317.10003347.10003350</concept_id>
       <concept_desc>Information systems~Recommender systems</concept_desc>
       <concept_significance>500</concept_significance>
       </concept>
   <concept>
       <concept_id>10002951.10003227.10003351</concept_id>
       <concept_desc>Information systems~Data mining</concept_desc>
       <concept_significance>300</concept_significance>
       </concept>
   <concept>
       <concept_id>10010147.10010257.10010293.10010294</concept_id>
       <concept_desc>Computing methodologies~Neural networks</concept_desc>
       <concept_significance>300</concept_significance>
       </concept>
 </ccs2012>
\end{CCSXML}

\ccsdesc[500]{Information systems~Recommender systems}
\ccsdesc[300]{Information systems~Data mining}
\ccsdesc[300]{Computing methodologies~Neural networks}

\keywords{robustness, adversarial defense, model extraction attacks, recommender systems}

\maketitle
\pagestyle{plain}
\sloppy
\section{Introduction}
Recommender systems, which provide suggestions for items that best fit the user's preference, are ubiquitous in our daily lives. They serve as important components in e-commerce \cite{schafer2001commerce}, social platforms \cite{fan2019graph}, healthcare \cite{yue2021overview}, finance \cite{sharaf2022survey}, and more. A good recommender system is vital for both users and service providers, as it significantly improves user experience by directing them to new items or products that precisely match their preferences. This, in turn, increases the number of active users and leads to higher profits for the service providers.

Service providers integrate recommender systems into their products and make them accessible to the target users or the public. However, two significant problems need to be addressed before deploying recommender systems: robustness and information leakage \cite{fan2022comprehensive, ge2022survey, wang2021fast, zhang2022pipattack, zhang2023comprehensive}. On the one hand, recommender systems are sensitive to noise in the training data, where even a small perturbation can lead to a significant degradation in performance \cite{fang2018poisoning, yuan2023manipulating}. On the other hand, recommender systems often involve intellectual property that needs protection, or contain private information of the training data that should not be disclosed \cite{zhang2021membership, zhang2021graph, yuan2023interaction}. Therefore, it is crucial to find ways to protect recommender systems against various adversarial attacks that aim to either poison the model or extract specific information.

Most existing adversarial attack methods for recommender systems assume that the attacker has certain prior knowledge. For example, some attacks are white-box or gray-box attacks \cite{lin2020attacking,li2016data,yang2017fake}, where the attacker has access to the target recommender system or the data. Other attacks are black-box attacks but assume that the attacker has access to external knowledge, such as data from a different domain \cite{fan2021attacking}, item metadata \cite{chen2022knowledge}, or some of the real users from the target data \cite{zeng2023practical}. However, such assumptions are often invalid in real-world scenarios, as the required knowledge is not always available. On the one hand, the attacker can hardly obtain access to the target model since service providers rarely publish their models. On the other hand, some works assume that the user-item interaction data is visible on the corresponding platforms, which can be easily collected by the attacker. In fact, most platforms nowadays do not make the user information public. Therefore, it is impractical to make such an assumption. For example, the attacker cannot see other users' interaction history on most video platforms and news platforms, such as YouTube, TikTok, BBC news, etc. These platforms either provide an option for users to make their personal information private, or do not allow to check other people's information at all. Additionally, for online shopping platforms like Amazon and Shopee, the attacker cannot see a person's full interaction history, as well as seeing all the people that have bought a certain item. Even if the attacker can identify a certain customer who has written a review for a product, the attacker has no idea what other products the customer has purchased. Although the attacker can obtain a dataset by crawling the target platform greedily, it is an extremely time-consuming process, and the quality of such a dataset is not guaranteed.


To attack a target model without prior knowledge, one black-box attack method called \emph{model extraction/stealing attack} \cite{oliynyk2022know, gong2020model, chakraborty2018adversarial} has been proposed. The attacker repeatedly queries the target model with synthesized data, and uses the feedback of the target model as labels to create a local dataset. This local dataset is used to train a local surrogate model to approximate the performance of the target model. By treating the surrogate model as the replacement of the target model, various tasks can be performed, including the adversarial attacks. Attacks are carried out based on the surrogate model and then transferred to the target model. This model extraction attack is powerful because it can be used to attack any machine learning models without any prior knowledge. There are only two universally effective defense methods, both with high risks. One method is to detect suspicious queries, but it can be easily bypassed by changing the query patterns. The other method is to alter the output of the model to fool the surrogate model, but this also decreases the utility of the target model.

Researchers have studied how to defend against model extraction attacks in classification tasks \cite{juuti2019prada,kariyappa2020defending,lee2019defending,orekondy2020prediction}. One line of research is to detect out-of-distribution queries \cite{kariyappa2020defending,juuti2019prada}. However, for recommender systems, out-of-distribution queries are hard to define for a sequence of user interactions. Another line of research is to change the model's output by treating the defense as an optimization problem \cite{lee2019defending,orekondy2020prediction}. However, in classification tasks, the predicted class probabilities are assumed to be visible to the attacker, so the defense methods focus on changing the probabilities of each class while maintaining accuracy, ensuring the probability of the true class is always the largest. Such methods cannot be directly applied to recommender systems, as recommender systems provide top-k rankings instead of class probabilities. Some works have also attempted to do model watermarking \cite{boenisch2020survey,shafieinejad2021robustness,xu2021watermarking}, where the surrogate model always produces similar outputs to the target model for certain queries. However, watermarking cannot prevent the model from being stolen, so it is not a primary choice in real-world scenarios.

To the best of our knowledge, there are no existing defense methods against model extraction attacks on recommender systems. In this paper, we propose a defense method called Gradient-based Ranking Optimization (GRO). The basic idea of GRO is to learn a target model whose output will maximize the loss of the attacker's surrogate model. Specifically, we use a student model as a replacement of the attacker's surrogate model. The student model tries to extract the target model by training on the top-k lists generated by the target model. We calculate the gradients of the top-k lists w.r.t. the loss of the student model. We can infer how to increase the loss of the student model by altering the top-k lists according to the gradients. However, we cannot directly obtain the gradients of the top-k lists because they are discrete and non-differentiable. We instead convert the list into a swap matrix $\mathbf{A}$, where $\mathbf{A}_{ij}=1$ if the $j$-th item is ranked at the $i$-th position. The gradients of such swap matrices can be easily obtained. We use these gradients to define a new swap matrix $\mathbf{A}'$ by setting the entries with the largest positive gradients to 1. If $\mathbf{A}'$ is used as the input to the student model, the loss of the student model will be maximized. Therefore, we define a swap loss to force $\mathbf{A}$ to approximate $\mathbf{A}'$, so that the target model will learn to fool the student model. 
The target model trained with GRO can be deployed directly. The black-box model extraction attacks will acquire a surrogate model with much worse performance than the target model. We make our implementation of GRO available online\footnote{\url{https://github.com/RinneSz/GRO-Gradient-based-Ranking-Optimization}}. Our contributions are summarized as follows:
\begin{itemize}
    \item We propose GRO, which is, to our knowledge, the first defense method against model extraction attacks on recommender systems. GRO is a general framework and can be used to protect any recommender systems.
    \item We propose to compute the gradients of the top-k ranking lists by converting them into swap matrices. Such gradients are used to compute a swap loss which can maximize the loss of the attacker's surrogate model.
    \item Extensive experiments show that GRO can effectively protect the target model from model extraction attacks by reducing the performance of the attacker's model while maintaining the utility of the target model.
\end{itemize}
\section{related work}

\subsection{Black-box Adversarial Attacks on Recommender Systems}
The robustness of machine learning models has been studied extensively by the research community \cite{szegedy2013intriguing,goodfellow2014explaining,papernot2016limitations,samangouei2018defense}. It is one of the major challenges when deploying machine learning models to real-world applications, such as recommender systems. Plenty of works have been proposed to explore the robustness of recommender systems \cite{yang2017fake,huang2021data,lin2020attacking,fan2021attacking,song2020poisonrec,chen2022knowledge,lin2022shilling,zeng2023practical}. Many of them assume a black-box setting, where the attacker knows nothing about the target model and its training data. But most of them require external knowledge or extra operations. For example, Fan et al. \cite{fan2021attacking} proposed CopyAttack, which uses reinforcement learning to attack black-box recommender systems. However, they require access to user data of a different domain. For example, to attack Amazon's model, they use the user profiles in eBay. They pre-define some spy users in the target model. The reward of the reinforcement learning is computed by observing the recommendations made to these spy users. Song et al. \cite{song2020poisonrec} proposed PoisonRec, a reinforcement learning method to attack black-box recommender systems. They use the RecNum as the reward, which is the number of times the item is recommended (clicked) by all users. Chen et al. \cite{chen2022knowledge} proposed KGAttack to attack black-box recommender systems with the help of the item knowledge graph (KG). The item KG is obtained by assuming that the metadata of the items is accessible. They first use TransE \cite{bordes2013translating} and GCN \cite{kipf2016semi} to get knowledge-enhanced item embeddings from the KG, then use such embeddings to train a reinforcement learning model to learn to inject fake user profiles. The reward is computed by spy users, which is the same as CopyAttack. Lin et al. \cite{lin2022shilling} proposed Leg-UP, a reinforcement learning black-box attack method. However, it requires access to some of the real users to use a GAN to generate fake users that are similar to benign users. Zeng et al. \cite{zeng2023practical} proposed PC-Attack. It requires access to data from a different domain, similar to that of CopyAttack. It also requires partial data from the target domain. Graph topology is captured by training with contrastive learning on the cross-domain data. Data from the target domain is used to fine-tune the model to generate fake users.

The above black-box adversarial attacks are impractical to be applied to real-world scenarios. They require either external knowledge \cite{zeng2023practical} or retraining the target model \cite{song2020poisonrec}, or both \cite{fan2021attacking,chen2022knowledge,lin2022shilling}. On the one hand, there is no guarantee that high-quality external knowledge is always available. On the other hand, retraining the target model is simply impossible for the attacker. Compared to these attacks, the model extraction attack is a more practical black-box attack, where a surrogate model is trained on the queries and outputs of the target model \cite{zhang2021reverse,yue2021black}. This surrogate model is used as a replacement for the target model to perform various downstream attacks, such as profile pollution attacks and data poisoning attacks. The model extraction attack is a very strong and universal attack since almost all the adversarial attacks, no matter white-box or black-box, can be conducted using the surrogate model and then transferred to the target model. Zhang et al. \cite{zhang2021reverse} proposed Reverse Attack. They train a surrogate model to approximate the target model by training on observed ranking lists. These ranking lists are crawled on the websites of those platforms. No user information is included. Both the training and inference use the similarity of item embeddings as the criteria. Yue et al. \cite{yue2021black} proposed a black-box model extraction attack on recommender systems. They query the target model and use the top-k ranking lists returned by the target model to train a surrogate model. The surrogate model is forced to return similar top-k ranking lists to those by the target model. This surrogate model is used as a replacement for the target model to perform various downstream attacks.

\subsection{Defenses Against Model Extraction Attacks in Other Domains}
People have tried to defend against model extraction attacks in other domains, most of which are for classification tasks \cite{lee2019defending,orekondy2020prediction,kariyappa2020defending,juuti2019prada}. However, they are not applicable to the recommender systems because their model assumptions and attack settings are different. For example, Lee et al. \cite{lee2019defending} proposed Reverse Sigmoid, an activation function used at the last layer of neural networks to replace the traditional Sigmoid. However, one of the premises of this defense is that the attacker can access the output posterior class probabilities. Orekondy et al. \cite{orekondy2020prediction} modeled the defense problem as a bi-level optimization problem. The goal of the optimization problem is to maximize the gradient deviation between the target model and the surrogate model, to mislead the surrogate model into a different gradient direction. It also assumes that the attacker can access the class probabilities. Kariyappa et al. \cite{kariyappa2020defending} proposed detecting suspicious queries which are out of distribution (OOD). Then, the output of those queries has minimized probabilities for the correct classes. Juuti et al. \cite{juuti2019prada} proposed PRADA, another detection method to identify OOD queries. However, for recommender systems, OOD queries are more difficult to detect. On the one hand, it is hard to define OOD patterns for sequences. On the other hand, real-world user behaviours are highly irregular, so it is difficult to distinguish abnormal queries from benign queries.






\section{Preliminaries}
\subsection{Problem Definition}
Model extraction attacks query the target model with their own data and then use the output of the target model to train a surrogate model. The surrogate model can have a different architecture from the target model, as both the attacker and the defender have no idea about each other's model architecture. This process can be formalized as follows:
\begin{equation}
\label{eq:target goal}
    \mathop{\text{min}}\limits_{\phi}\ L_{\text{surrogate}}(x,f_{\theta},g_{\phi})=|f_{\theta}(x)-g_{\phi}(x)|
\end{equation}
Here, $L_{\text{surrogate}}$ represents the loss of the surrogate model, $x$ is the query, $f$ is the target model, $\theta$ is the parameter of $f$, $g$ is the surrogate model, and $\phi$ is the parameter of $g$. The attacker's goal is to minimize the difference between the output of the target model and the output of the surrogate model.

To defend against model extraction attacks, we need to maximize the loss of the surrogate model while minimizing the loss of the target model. This can be formalized as:

\begin{gather}
    \label{eq:surrogate goal}
     \mathop{\text{min}}\limits_{\theta}\ L_{\text{target}}(x,y,f_{\theta})-L_{\text{surrogate}}(x,f_{\theta},g_{\phi}),\\
     \text{s.t.}\ \phi =\mathop{\text{argmin}}\limits_{\phi} L_{\text{surrogate}}(x,f_{\theta},g_{\phi}) 
\end{gather}

Here, $L_{\text{target}}$ is the original loss function of the target model, and $y$ is the label. This is a bi-level optimization problem. There are two major challenges in solving this problem. First, the defender has no access to the surrogate model, so they cannot optimize $L_{\text{surrogate}}$ directly. But this can be addressed by using a local model to simulate the surrogate model. Second, $f_{\theta}(x)$ is a discrete ranking list. We cannot obtain its gradient. This makes it impossible to perform back-propagation and optimization. However, we can convert it into a \emph{swap matrix} so that its gradient can be computed. We will discuss how we convert the ranking list into a swap matrix in \autoref{sec:ranking to swap}.

\begin{figure*}[t]
\centering
 \includegraphics[width=1.0\linewidth]{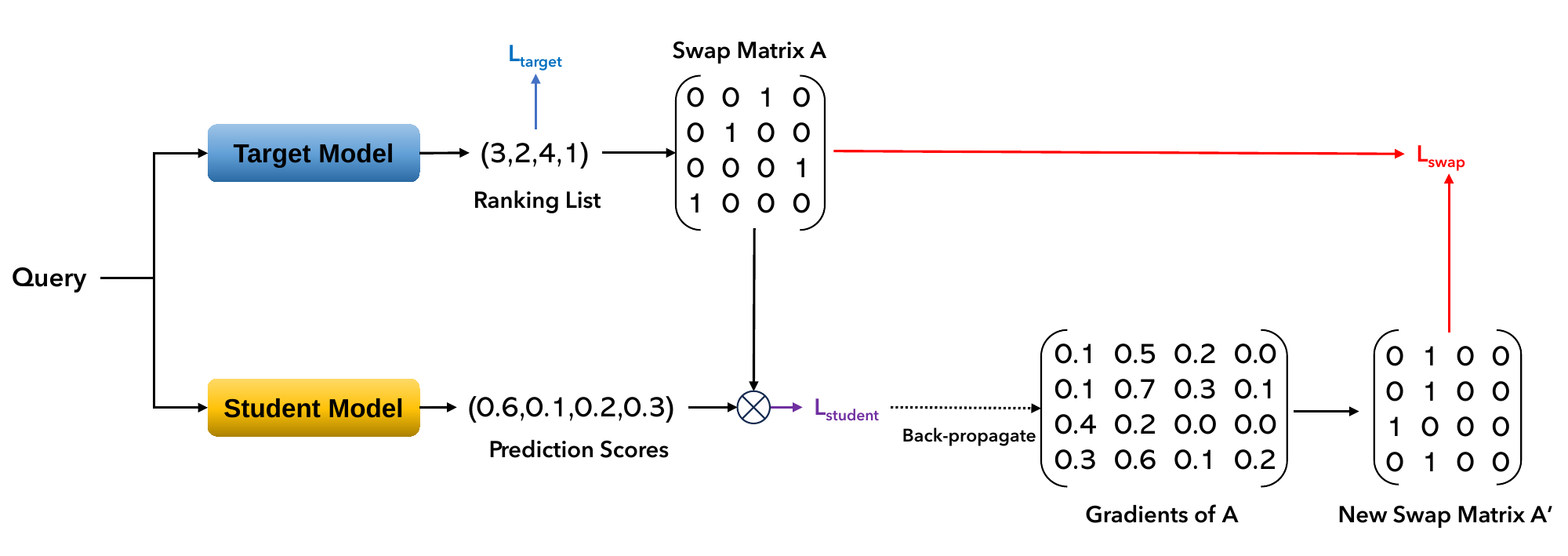}
\vspace{-3em}
\caption{The workflow of GRO. The target model produces a ranking list for the input query. The ranking list is converted into a swap matrix $\mathbf{A}$. Then $\mathbf{A}$ is multiplied with the output of the student model to calculate its loss. We back-propagate the loss and obtain the gradients of $\mathbf{A}$, which are then converted into a new swap matrix $\mathbf{A}'$. A swap loss is calculated using $\mathbf{A}$ and $\mathbf{A}'$ to learn a target model that can fool the student model.}
\label{fig:GRO flow}
\vspace{-1.3em}
\end{figure*}

\subsection{Attacker}
Before introducing our defense method, we first introduce how the model extraction attack against recommender systems works. The attacker begins by generating a fake user interaction history according to certain strategy, e.g. randomly or autoregressively \cite{yue2021black}. Then, the attacker queries the target model and obtains a top-k recommendation list. This list is ordered by preference scores, with items having high scores ranked at the top. The attacker uses these rankings to train a surrogate model that approximates the performance of the target model. The training objective of the attacker is to align the scores of the items with the ordering. It consists of two parts: first, pushing the item ranked higher to have a higher score than items ranked lower, and second, ensuring that the items in the top-k list have higher scores than items outside the top-k list. Specifically, the attacker's loss function is defined as follows:
\begin{equation}
\label{eq:surrogate loss}
    L_{\text{surrogate}}=\sum_{i=1}^{k-1}\text{max}(s_{i+1}-s_{i}+m_{1},0)+\sum_{i=1}^{k}\text{max}(s_{i}^{neg}-s_{i}+m_{2},0)
\end{equation}
Here, $s_{i}$ is the surrogate model's prediction score for the item ranked at the $i$-th position according to the target model's top-k list, $s_{i}^{neg}$ is a sampled negative item that ranks outside the top-k list, and $m_{1}$ and $m_{2}$ are two hyper-parameters indicating the margins. The function max($\cdot,\cdot$) returns the maximum value between its arguments. By training the surrogate model with this loss function, it learns to produce similar top-k lists to the target model.

\section{Gradient-based Ranking Optimization}
\subsection{Challenges}
To defend against model extraction attacks, an effective solution is to maximize the loss of the attacker's surrogate model. In a strict black-box setting, where the attacker can only use the top-k ranking lists generated by the target model to train the surrogate model, we aim to learn a target model whose generated top-k ranking lists will maximize the loss of the surrogate model. However, since the ranking lists are discrete, we cannot directly obtain the gradients of them. We need to find an approach to calculate those gradients. This is the main challenge in designing our method.

\subsection{Overview}
In this section, we introduce our Gradient-based Ranking Optimization (GRO). The basic idea of GRO is to maximize the loss of the surrogate model while minimizing the loss of the target model. GRO is implemented during the training phase of the target model. By training the target model with GRO, it becomes capable of deceiving model extraction attacks while preserving good utility. An illustration of GRO workflow is depicted in \autoref{fig:GRO flow}. A student model is used to mimic the behaviour of the attacker's surrogate model, whose goal is to extract the target model. In each iteration, the target model produces the ranking list of the input query, and then convert it into a swap matrix $\mathbf{A}$. The student model calculates its loss based on the swap matrix. We back-propagate the loss and obtain the gradients of the swap matrix, and convert the gradients into a new swap matrix $\mathbf{A}'$. A swap loss is calculated based on $\mathbf{A}$ and $\mathbf{A}'$. Thus, the target model can learn how to increase the loss of the student model by approximating its output to $\mathbf{A}'$. Next, we introduce each step in detail.

\subsection{The Target Model and The Student Model}
We first introduce the design of the target model and the student model. In model extraction attacks, it is essential for the target model to be capable of processing unseen queries (i.e., the target model should be inductive). Without loss of generality, we assume a sequential recommendation model as the target model. This is also a practical choice for real-world recommender systems, where encountering new and unseen queries is common. The target model can be any sequential model that takes a sequence of items as input and generates a top-k ranking list as the next item recommendation. The cross-entropy loss is usually used as the loss function in sequential recommendations. In our GRO, we assume that the student model has the same architecture as the target model, which is the worst case for defense where the attacker somehow figures out the target model's architecture. The input of the student model is the same sequence used to train the target model. To train the student model, we use the same loss function as \autoref{eq:surrogate loss}, supervised by the top-k ranking lists produced by the target model. In this way, the student model serves as an approximation of the attacker's surrogate model. We can increase the surrogate model's loss by increasing the student model's loss.

\subsection{Converting Top-k Ranking Lists to Swap Matrices}
\label{sec:ranking to swap}
To maximize the loss of the student model, it is necessary to obtain the gradients of the top-k ranking lists. This is non-trivial since the ranking lists are non-differentiable. To solve this problem, we convert the ranking lists into the swap matrices which are instead differentiable.

Assume that the dataset contains a total of $m$ items. For a given top-k ranking list, we can construct a swap matrix $\mathbf{A}$ with dimensions $k\times m$. Suppose the top-k ranking list is $(a_{1},a_{2},a_{3},...,a_{k})$, where $a_{i}$ is the ID (starting from 1) of the item ranked at the $i$-th position. In the swap matrix, the entry $\mathbf{A}_{i,a_{i}}$ is set to 1, while all other entries are set to 0. For instance, if the item with ID=3 is ranked at the first position, then $\mathbf{A}_{1,3}$ would be 1. An example of converting the ranking list into the swap matrix is shown in \autoref{fig:GRO flow} upper center. In such a swap matrix, each row contains only one non-zero entry, which indicates the specific item ranked at that position. By performing a matrix multiplication between the swap matrix and the item ID vector $(1,2,3,...,m)^\top$, we obtain the exact top-k ranking list produced by the target model.

Meanwhile, the student model produces a 1-D vector $\mathbf{S}_{\text{student}}$ of scores for all items, ordered by their IDs. This vector has a size of $m\times 1$. By doing a matrix multiplication between $\mathbf{A}$ and $\mathbf{S}_{\text{student}}$, we obtain a new list of scores with size $k\times 1$. This new list preserves the same ordering as the top-k ranking list generated by the target model. It can then be utilized to compute the loss function of the student model using \autoref{eq:surrogate loss}. We can back-propagate the loss and obtain the gradient of $\mathbf{A}$.

\subsection{Gradient Computation and Optimization}
By back-propagating the loss function of the student model, we can acquire the gradients of the swap matrix $\mathbf{A}$. Now we need to design an objective function based on the gradients to train the target model in order to maximize the loss of the student model.


Each entry in $\mathbf{A}$ has a corresponding gradient value. If the gradient is positive, it means that increasing the value of that entry can generally increase the loss. Instead, if the gradient is negative, it means that decreasing the value of that entry can generally increase the loss. Therefore, if an entry in $\mathbf{A}$ is 0, and its gradient is positive, we can change its value to 1 so that the loss is expected to increase. For each row, by setting the entry with the largest gradient to 1, and setting all other entries to 0, we can obtain a new swap matrix $\mathbf{A}'$. An example is shown in \autoref{fig:GRO flow} lower right. If the output of the target model is exactly $\mathbf{A}'$, then the loss of the student model will be maximized. However, simply forcing the model to learn to output $\mathbf{A}'$ is not a good choice. On the one hand, we want to preserve the utility of the target model, but $\mathbf{A}'$ can be very different from $\mathbf{A}$, which will significantly degrade the performance of the model. On the other hand, the model's output may not converge to $\mathbf{A}'$ since $\mathbf{A}'$ can be an invalid swap matrix. Note that one item may be ranked at multiple positions simultaneously in $\mathbf{A}'$. For example, in \autoref{fig:GRO flow}, the second item has the largest gradient in the first, the second, and the fourth row, then it will be ranked at the first, the second, and the fourth position at the same time, which is impossible for a valid swap matrix where each item can only be ranked at one position. Therefore, we cannot let the model to learn the raw $\mathbf{A}'$ directly. We need to design a loss function that can achieve two goals: first, it should let the output of the target model approach $\mathbf{A}'$ as much as possible, while still being a valid swap matrix; second, the learned swap matrix should preserve the original ranking as much as possible, to avoid a severe utility degradation. 

To this end, we define the swap loss as:
\begin{equation}
\label{eq:swap loss}
    L_{\text{swap}}=\frac{1}{k}\sum_{i=0}^{k-1}\text{max}((\mathbf{A}_{i}-\mathbf{A}'_{i})\mathbf{S}_{\text{target}}+m_{\text{swap}},0)
\end{equation}
where $k$ is the number of items in the output ranking list (top-k), $\mathbf{A}_{i}$ and $\mathbf{A}'_{i}$ are the $i$-th row of the original swap matrix and the new swap matrix, $\mathbf{S}_{\text{target}}$ is the target model's predicted scores for all items ordered by the item IDs, $m_{\text{swap}}$ is a hyper-parameter that denotes the margin, max($\cdot,\cdot$) returns the maximum value. Such a swap loss pushes the item with the largest gradient to have a higher score than the item ranked at the corresponding position. When converged, this loss can ensure that both the orderings of $\mathbf{A}$ and $\mathbf{A}'$ are preserved as much as possible. 

\begin{lemma}
\label{lemma:swap}
Suppose the top-k rankings indicated by $\mathbf{A}$ and $\mathbf{A}'$ are $(a_{1},a_{2},a_{3},...,a_{k})$ and $(a_{1}',a_{2}',a_{3}',...,a_{k}')$ respectively, where $a_{i}$ and $a_{i}'$ denote the items ranked at the $i$-th position. When the swap loss defined in \autoref{eq:swap loss} (assume $m_{\text{swap}}=0$) is converged to 0, for $\forall i\in[1,k]$, if $a_{i}'\neq a_{j}'$ for $\forall j\in[1,i)$, we have $a_{i}=a_{i}'$.
\end{lemma}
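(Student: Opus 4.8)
The plan is to first turn the convergence hypothesis into a collection of per-position score inequalities, and then run an induction on the rank position $i$ that leans on the fact that $\mathbf{A}$ encodes the target model's own score ranking. Throughout, let $s(a)$ denote the component of $\mathbf{S}_{\text{target}}$ indexed by item $a$, so that $\mathbf{A}_i\mathbf{S}_{\text{target}}=s(a_i)$ and $\mathbf{A}'_i\mathbf{S}_{\text{target}}=s(a_i')$.

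First I would observe that $L_{\text{swap}}$ is a nonnegative average of hinge terms, so $L_{\text{swap}}=0$ forces every summand to vanish. With $m_{\text{swap}}=0$ this means $\max(s(a_i)-s(a_i'),0)=0$, i.e. $s(a_i)\le s(a_i')$ for every $i\in[1,k]$. In words, at each position the item selected by $\mathbf{A}'$ scores at least as high under the target model as the item actually ranked there by $\mathbf{A}$. Next I would exploit the meaning of $\mathbf{A}$: its rows record the target model's top-$k$ list, which is sorted by decreasing preference score, so (assuming distinct scores, or a fixed tie-breaking rule) $s(a_1)>s(a_2)>\cdots>s(a_k)$ and these are the $k$ largest scores among all $m$ items. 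Consequently the set of items whose score is at least $s(a_i)$ is exactly $\{a_1,\dots,a_i\}$. Combined with $s(a_i')\ge s(a_i)$ from the previous step, this yields $a_i'\in\{a_1,\dots,a_i\}$, and hence by induction the prefix-containment invariant $\{a_1',\dots,a_i'\}\subseteq\{a_1,\dots,a_i\}$.

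The conclusion then follows by induction on $i$. For the base case $i=1$ the invariant gives $a_1'\in\{a_1\}$, so $a_1'=a_1$. For the inductive step I would use the first-occurrence hypothesis $a_i'\neq a_j'$ for all $j\in[1,i)$: together with the inductive assumption that the earlier first-occurrence positions already coincide with $a_1,a_2,\dots$ in order, the distinct values occurring in $a_1',\dots,a_i'$ number exactly $i$ and all lie in $\{a_1,\dots,a_i\}$, forcing set equality; stripping off the already-matched prefix leaves $a_i'=a_i$.

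I expect the main obstacle to be precisely this counting step. One has to track the first-occurrence positions carefully and make sure the distinct entries of the prefix genuinely fill $\{a_1,\dots,a_i\}$ exactly, which is where the distinct-score (or consistent tie-break) assumption is essential: without it the ``items with score $\ge s(a_i)$'' set is not pinned down to $\{a_1,\dots,a_i\}$, and the position-wise matching can break. A related delicacy is that the hypothesis $a_i'\neq a_j'$ for $j\in[1,i)$ only asserts that $a_i'$ is new, not that the whole prefix $a_1',\dots,a_{i-1}'$ is pairwise distinct; the cleanest argument therefore carries the stronger invariant that the prefix is a valid partial permutation, and I would make sure the induction is set up so that this distinctness is maintained (or explicitly assumed) at every earlier step before invoking the cardinality equality.
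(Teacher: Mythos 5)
Your opening move coincides with the paper's: since the swap loss is an average of nonnegative hinge terms, convergence to $0$ with $m_{\text{swap}}=0$ forces $s(a_i)\le s(a_i')$ for every row. From there, however, you take a genuinely different and more systematic route. The paper argues row by row via contradiction: if $s_1<s_1'$ then $a_1'$ would outscore the item ranked first in $\mathbf{A}$, which is impossible, hence $s_1=s_1'$, with ties dispatched by declaring equal-scoring items ``interchangeable''; the same case analysis is repeated for the second row and the proof stops there, the general position being left implicit. Your version --- under distinct scores (or a fixed tie-break) the items scoring at least $s(a_i)$ are exactly $\{a_1,\dots,a_i\}$, hence $a_i'\in\{a_1,\dots,a_i\}$, giving the prefix-containment invariant, then induction --- is cleaner, and it makes explicit what the paper's ``without loss of generality'' hides: absent a consistent tie-breaking rule the conclusion $a_i=a_i'$ can only hold up to exchanging items of equal score.

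The obstacle you flag at the end, though, is not a mere delicacy of bookkeeping --- it is where the lemma as stated actually fails, and your instinct to carry the stronger duplicate-free-prefix invariant is precisely the necessary repair. The hypothesis only requires $a_i'$ to be new at position $i$; duplicates earlier in $\mathbf{A}'$ are expressly permitted (the paper's own proof discusses the case $a_2'=a_1'$). After such a duplicate your cardinality count breaks, and no argument can rescue it: take scores $s(x)=10>s(y)=9>s(z)=8$, so $\mathbf{A}$ ranks $(x,y,z)$, and let $\mathbf{A}'=(x,x,y)$. All three hinge terms vanish ($10\le 10$, $9\le 10$, $8\le 9$), and position $3$ is the first occurrence of $y$ in $\mathbf{A}'$, yet $a_3=z\neq y=a_3'$. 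So the conclusion genuinely holds only for positions $i$ at which $a_1',\dots,a_i'$ are pairwise distinct --- exactly the strengthened invariant you propose --- and under that strengthening your induction closes correctly, since $a_i'\in\{a_1,\dots,a_i\}\setminus\{a_1,\dots,a_{i-1}\}=\{a_i\}$. The paper's proof never meets this case because it examines only the first two rows; the paragraph following the lemma tacitly concedes it by saying that slots corresponding to repeated items are filled by arbitrary ``filler'' items. In short: your proof is correct and more rigorous than the paper's on the domain where the lemma is true, and the extra assumption you hesitated over is not optional but necessary.
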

\begin{proof}
Let's consider the first two rows of $\mathbf{A}$ and $\mathbf{A}'$ for an example. Suppose the scores of item $a_{i}$ and $a_{i}'$ predicted by the target model are $s_{i}$ and $s_{i}'$ respectively. When the loss is converged to 0, for $a_{1}$ and $a_{1}'$, we have $s_{1}\leq s_{1}'$. If $s_{1}<s_{1}'$ is true, it means that $a_{1}\neq a_{1}'$, because they cannot be the same item if they have different scores. In other words, $a_{1}$ and $a_{1}'$ are two different items. Therefore, $a_{1}'$ should rank higher than $a_{1}$ in $\mathbf{A}$ because $a_{1}'$ has a higher score. However, $a_{1}$ is already ranked at the first position in $\mathbf{A}$, so it is contradictory. Therefore, we can only have $s_{1}=s_{1}'$. In this case, $a_{1}$ and $a_{1}'$ might be the same item, or might be different items with the same score. Without loss of generality, we assume that they are the same item, since they are interchangeable if they are different items. Then we consider $a_{2}$ and $a_{2}'$. If converged, we have $s_{2}\leq s_{2}'$. If $s_{2}=s_{2}'$ is true, then $a_{2}$ and $a_{2}'$ are the same item. If $s_{2}<s_{2}'<s_{1}'$, it means that $a_{2}$, $a_{2}'$, and $a_{1}'$ (also $a_{1}$) must be three different items. However, since $a_{2}$ is already ranked at the second place, it is impossible for two different items to have higher scores than $a_{2}$. This is contradictory and cannot be true. But things are different if $s_{2}<s_{2}'=s_{1}'$. In this case, we have $a_{2}'=a_{1}'$, which means that the first row and the second row of $\mathbf{A}'$ are identical. One item is ranked at both the first position and the second position in $\mathbf{A}'$ (for example, \autoref{fig:GRO flow}). In this case, $a_{2}$ can be an arbitrary item as long as its score is no less than $a_{3}$.
\end{proof}

According to \autoref{lemma:swap} and its proof, the target model will rank an item at the same position as $\mathbf{A}'$ if the item first appears in $\mathbf{A}'$. For items with multiple appearances, those slots will be filled with other items that did not appear in $\mathbf{A}'$. Most of these filler items are, however, from the top-k list of $\mathbf{A}$, because they tend to have higher scores than other items. Therefore, \autoref{eq:swap loss} can force the model to both learn the new swap matrix $\mathbf{A}'$ and preserve the original swap matrix $\mathbf{A}$.

At last, we jointly train the target model and the student model with the following loss function:
\begin{equation}
\label{eq:final loss}
    L=L_{\text{target}}+L_{\text{student}}+\lambda L_{\text{swap}}
\end{equation}
where $L_{\text{target}}$ is the loss function of the target model, such as the cross-entropy loss; $L_{\text{student}}$ is the loss function of the student model (\autoref{eq:surrogate loss}); $L_{\text{swap}}$ is the aforementioned swap loss in \autoref{eq:swap loss}, and $\lambda$ is a hyper-parameter controlling the magnitude.

After training, the student model is discarded, and the target model can be deployed directly. The model extraction attacks aiming at stealing the target model can fail to acquire a satisfying performance, as we will show in the experiment section.

\begin{table}[t]
\caption{Dataset Statistics}
\vspace{-1em}
\begin{center}
 \begin{tabular}{|c|c|c|c|c|} 
 \hline
 Dataset & \# Users & \# Items & Avg. length & Density \\
 \hline
 ML-1M & 6040 & 3416 & 165.5 & 4.84\% \\
 \hline
 ML-20M & 138493 & 18345 & 144.3 & 0.79\% \\
 \hline
 Steam & 334542 & 13046 & 12.6 & 0.10\% \\
 \hline
\end{tabular}
\label{table:datasets}
\end{center}
\vspace{-1.3em}
\end{table}

\section{Experiments}
We apply GRO to the state-of-the-art sequential recommendation model Bert4Rec \cite{sun2019bert4rec}, a transformer-based model, on three benchmark datasets, then use the state-of-the-art model extraction attack developed by Yue et al. \cite{yue2021black} to extract the target model. We compare our GRO with three baselines and show how the target model and the surrogate model perform under each defense. First, we show the result when the surrogate model has the same architecture (Bert4Rec) as the target model. Second, we change the surrogate model into NARM \cite{li2017neural}, a GRU-based sequential recommendation model, to show the performance of GRO when the target model and the surrogate model have different architectures. At last, we show how two important hyper-parameters influence the performance, including the number of queries and the $\lambda$ in \autoref{eq:final loss}.

\begin{figure*}[t]
\centering
	\subfloat[ML-1M Target Model HR]{\includegraphics[width=0.25\linewidth]{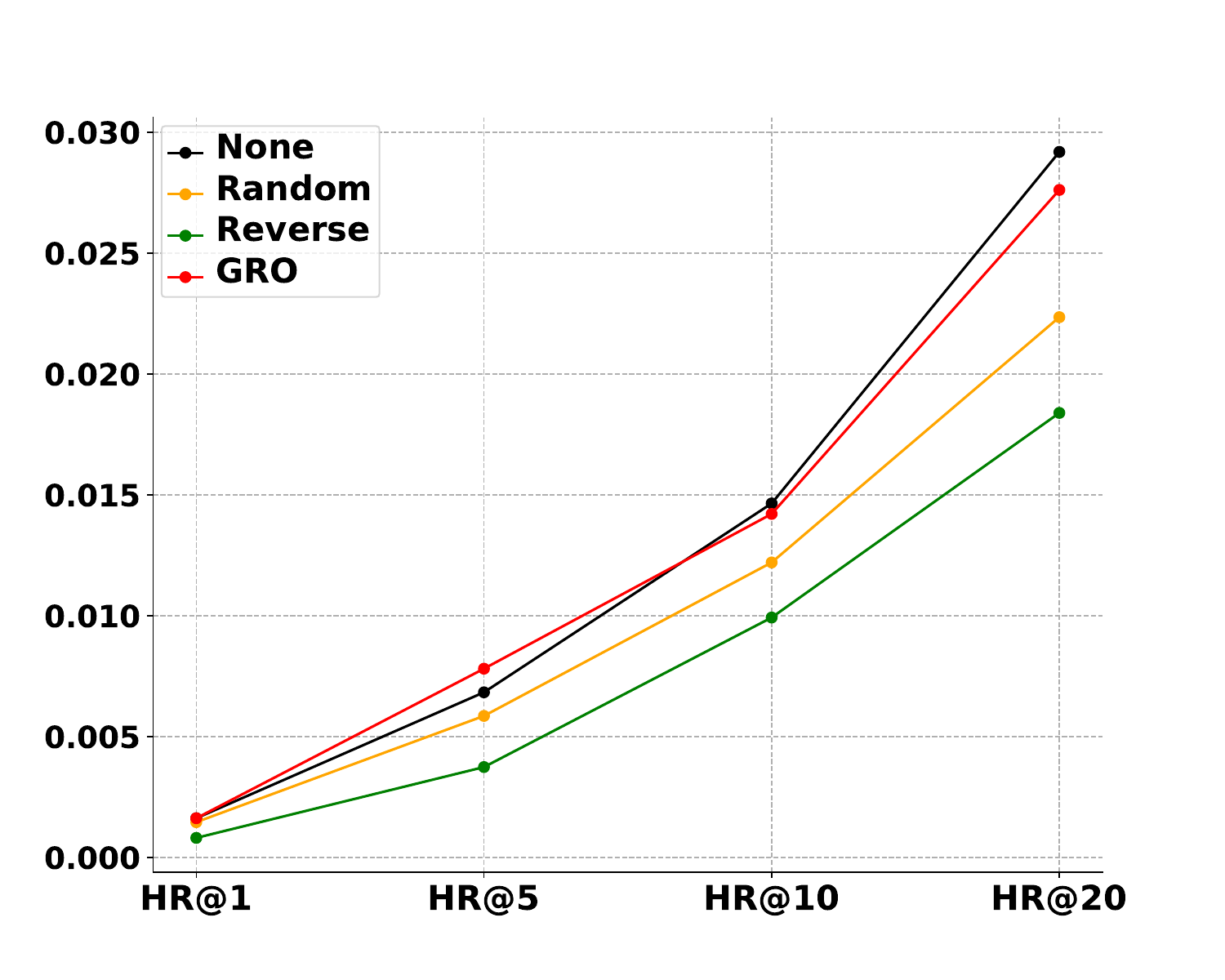}\label{fig:ml1m target hr}}
    \subfloat[ML-1M Target Model NDCG]{\includegraphics[width=0.25\linewidth]{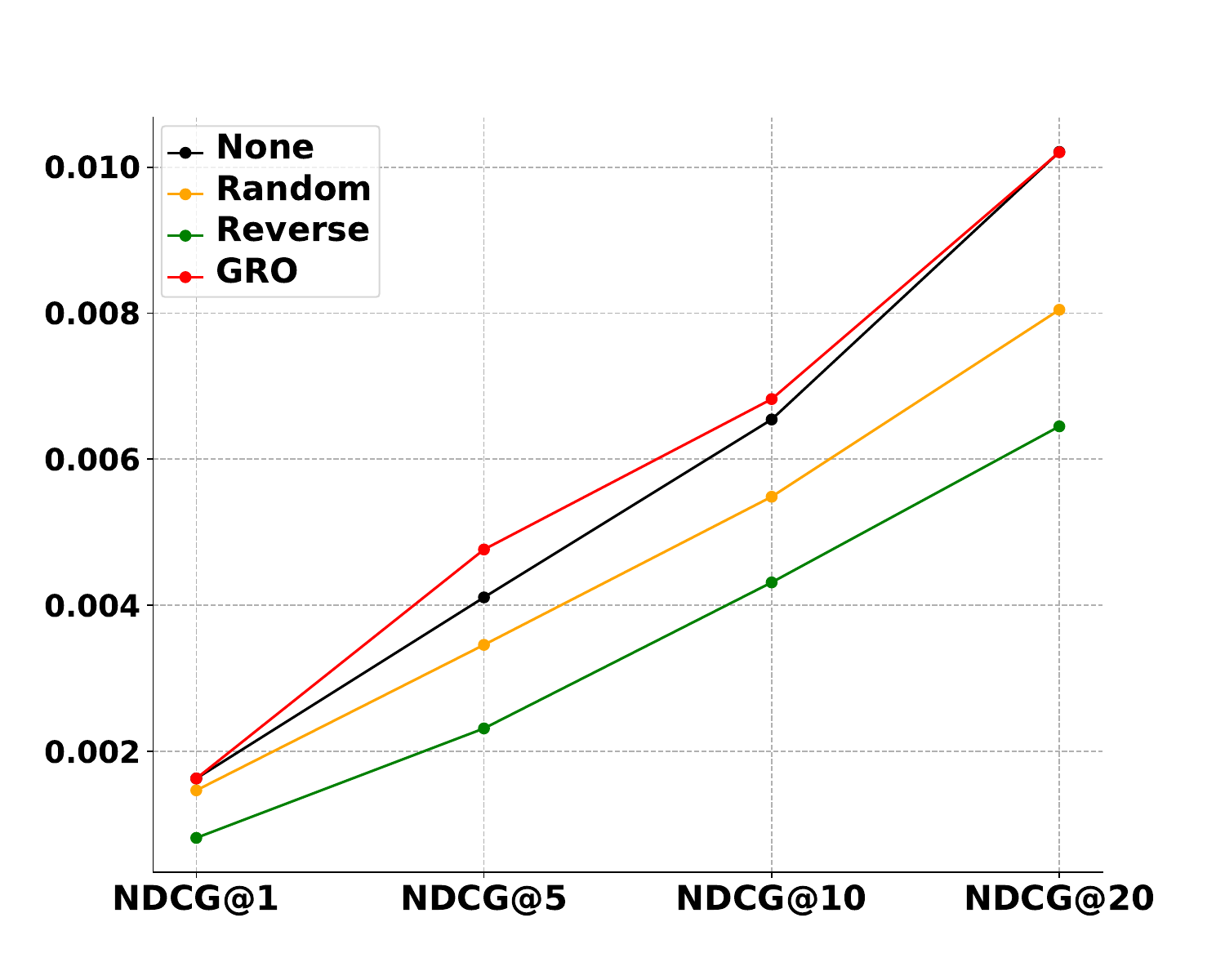}\label{fig:ml1m target ndcg}}
	\subfloat[ML-1M Surrogate Model HR]{\includegraphics[width=0.25\linewidth]{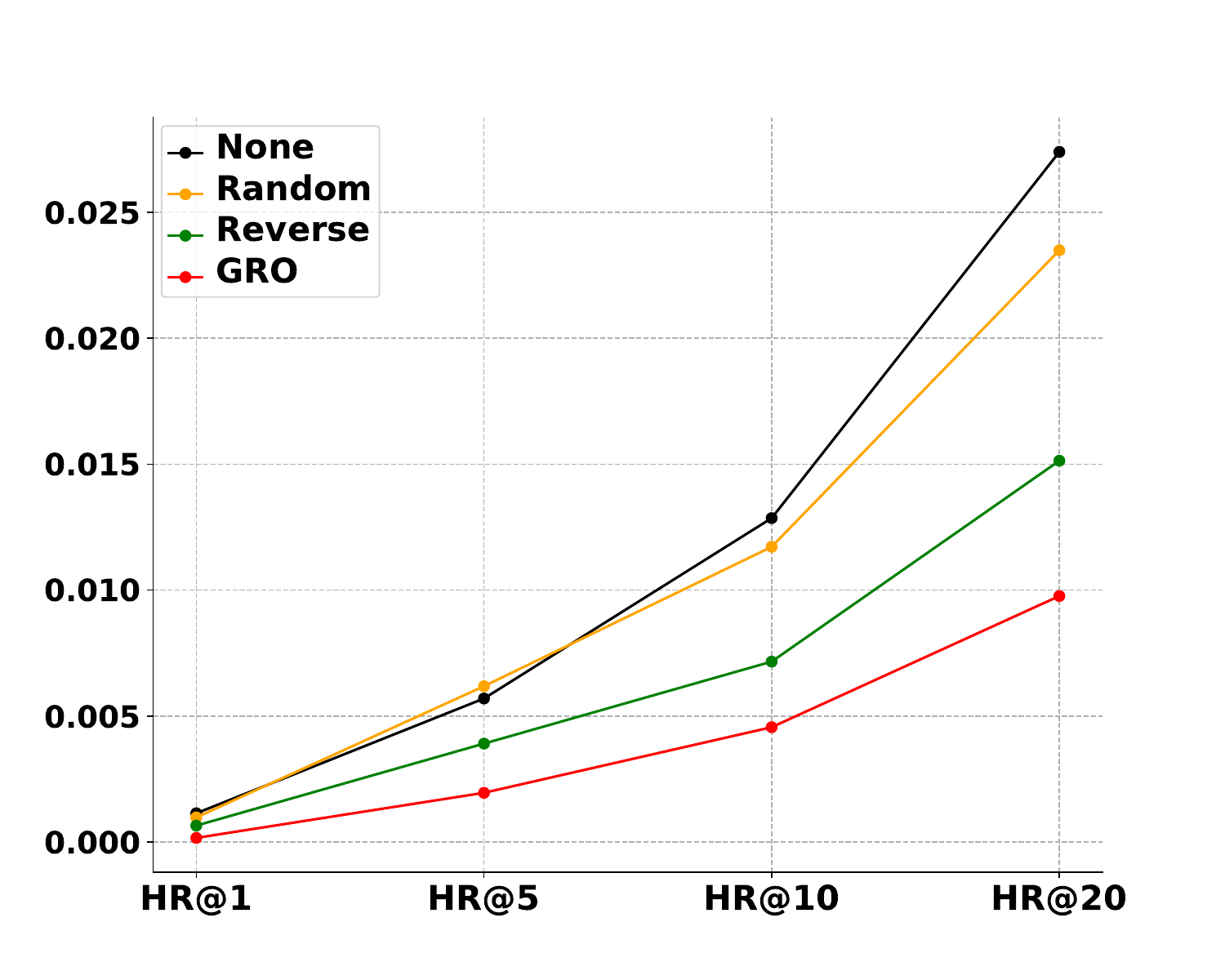}\label{fig:ml1m surrogate hr}}
	\subfloat[ML-1M Surrogate Model NDCG]{\includegraphics[width=0.25\linewidth]{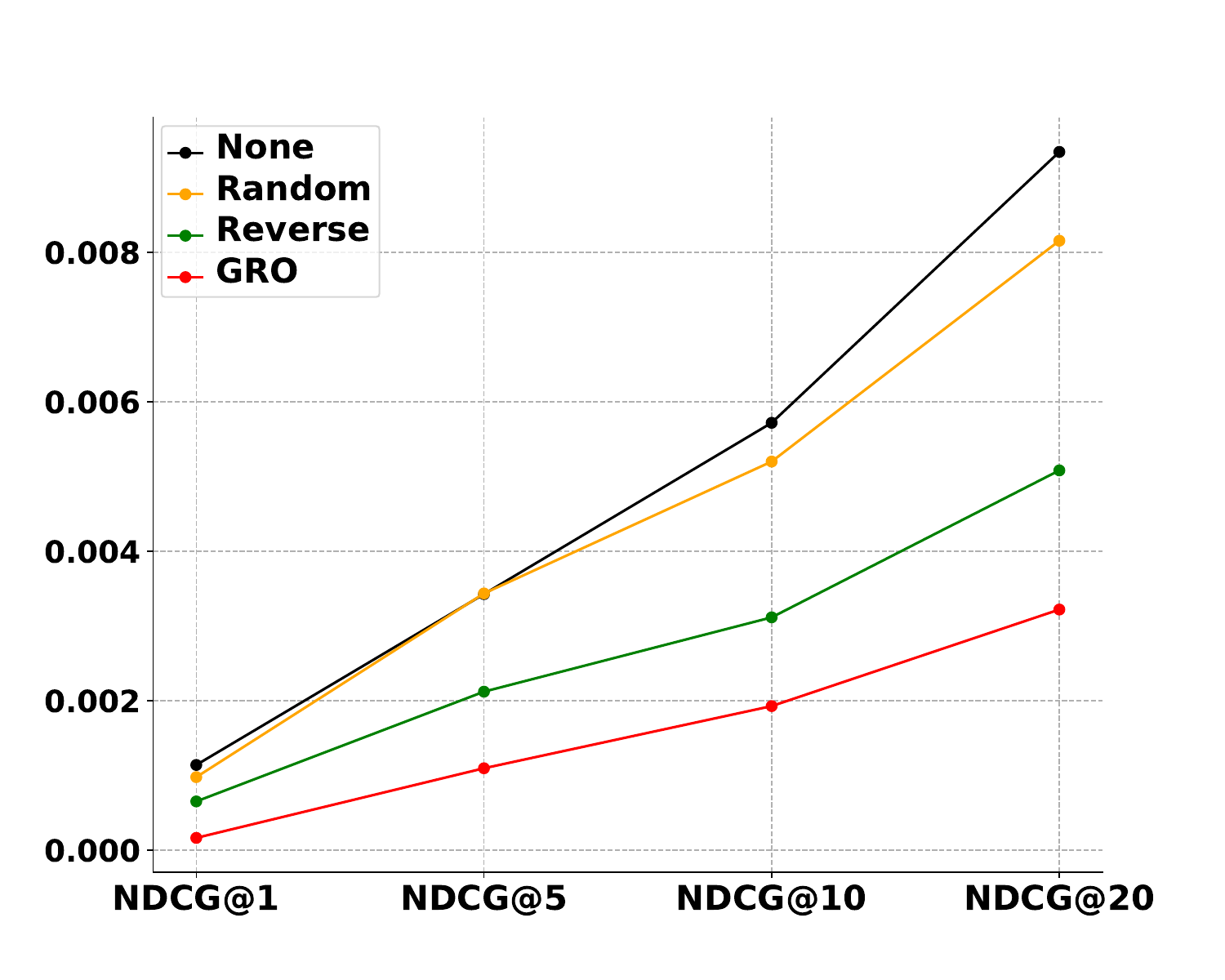}\label{fig:ml1m surrogate ndcg}}\\
	\vspace{-1.3em}
     \subfloat[ML-20M Target Model HR]{\includegraphics[width=0.25\linewidth]{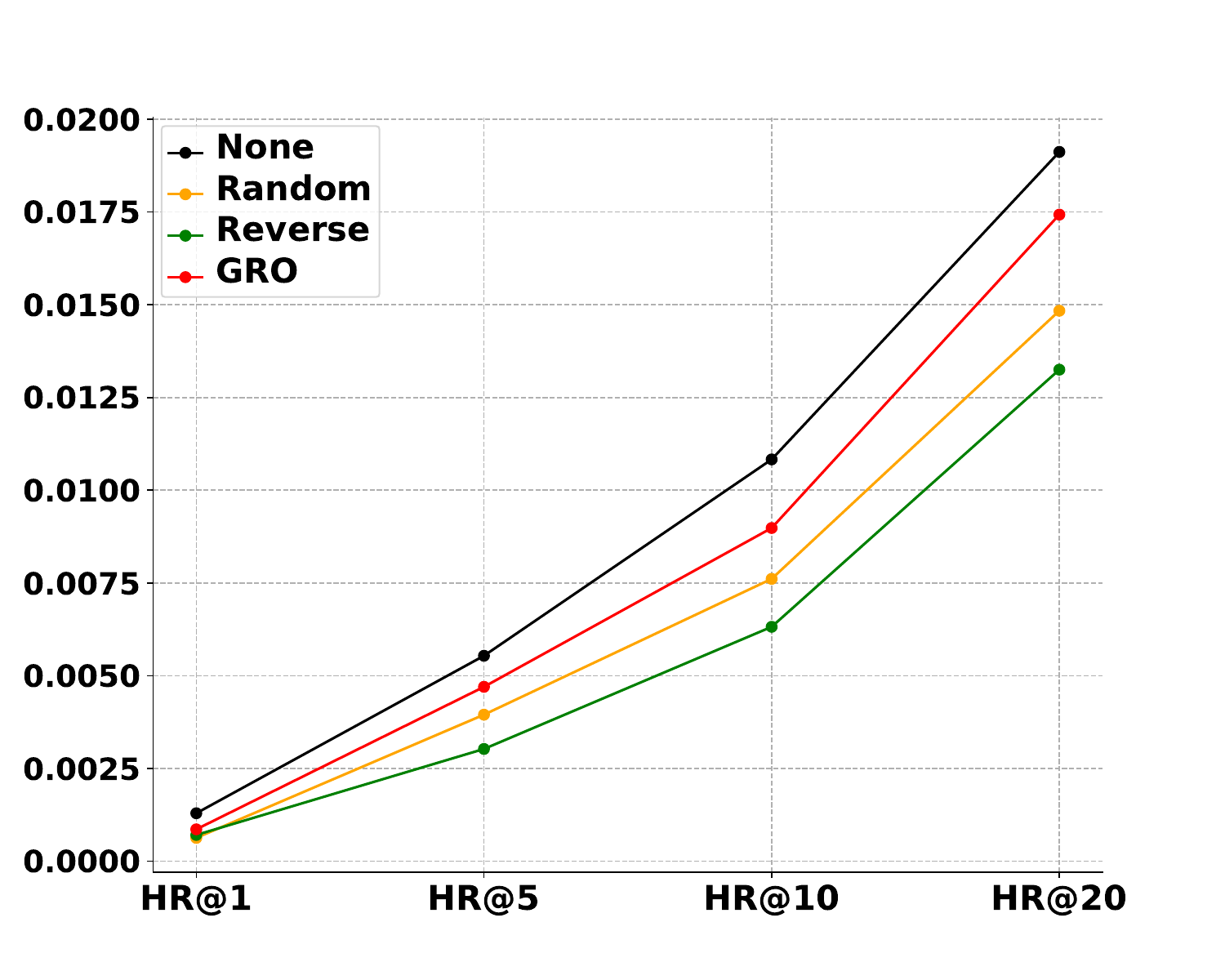}\label{fig:ml20m target hr}}
    \subfloat[ML-20M Target Model NDCG]{\includegraphics[width=0.25\linewidth]{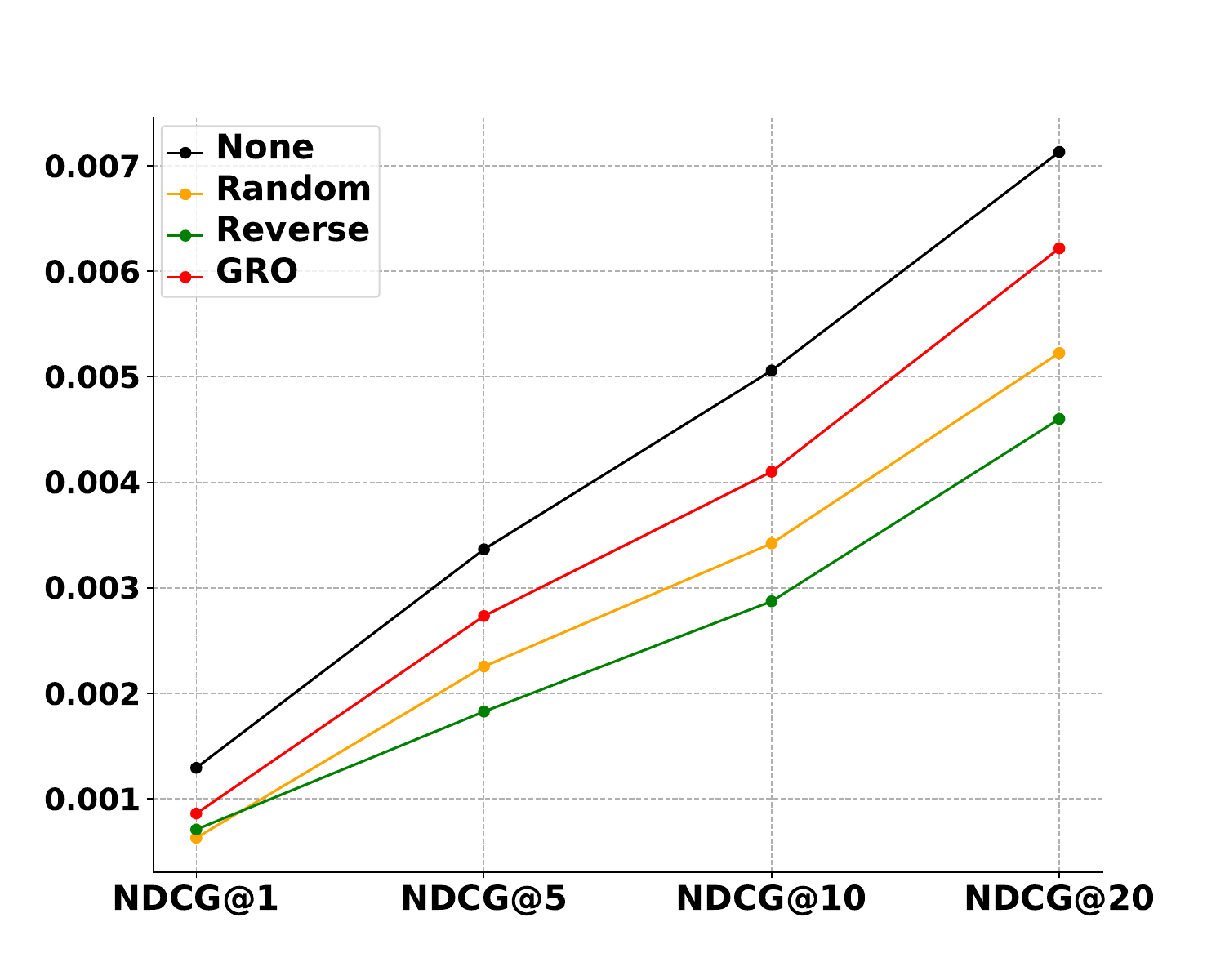}\label{fig:ml20m target ndcg}}
	\subfloat[ML-20M Surrogate Model HR]{\includegraphics[width=0.25\linewidth]{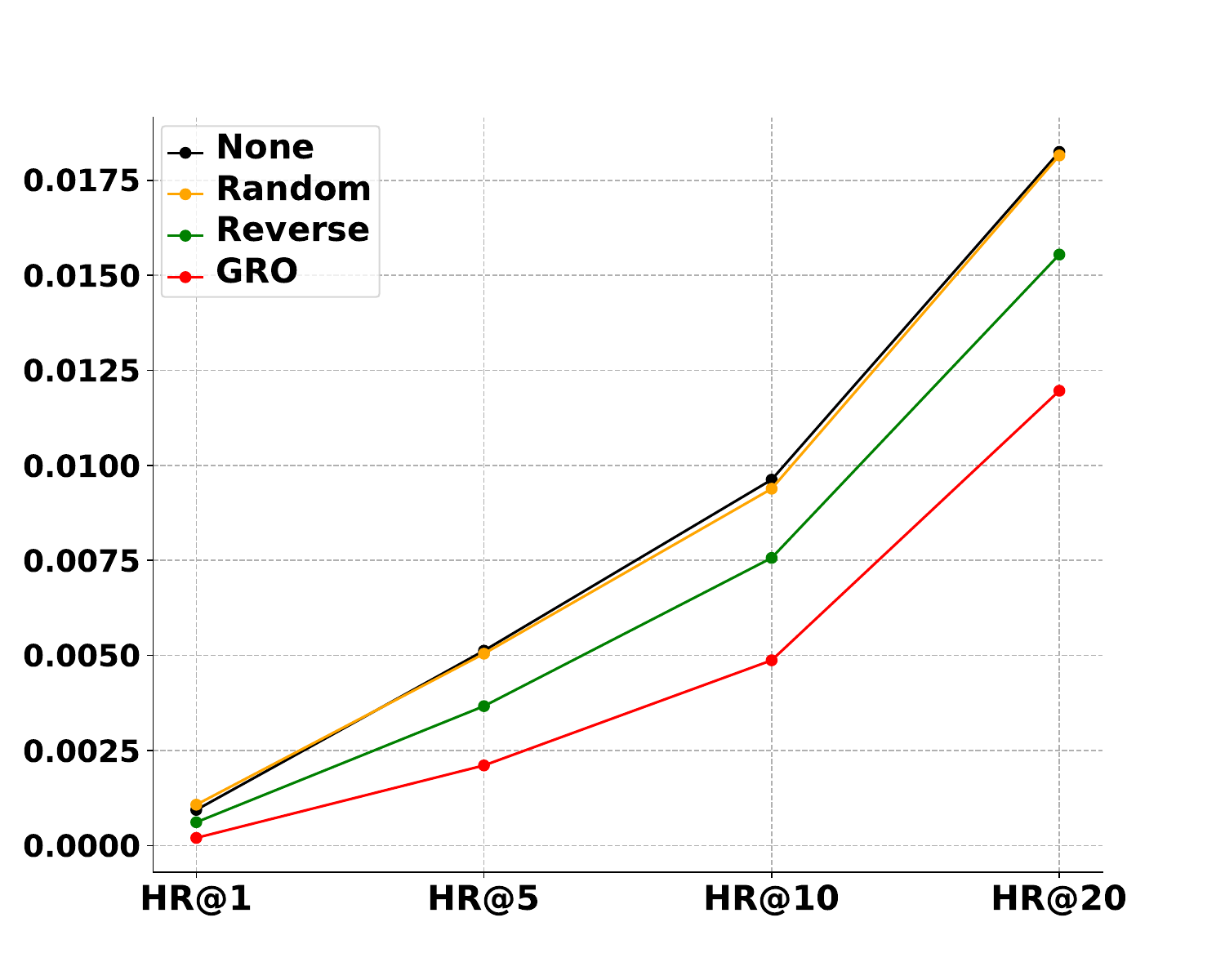}\label{fig:ml20m surrogate hr}}
	\subfloat[ML-20M Surrogate Model NDCG]{\includegraphics[width=0.25\linewidth]{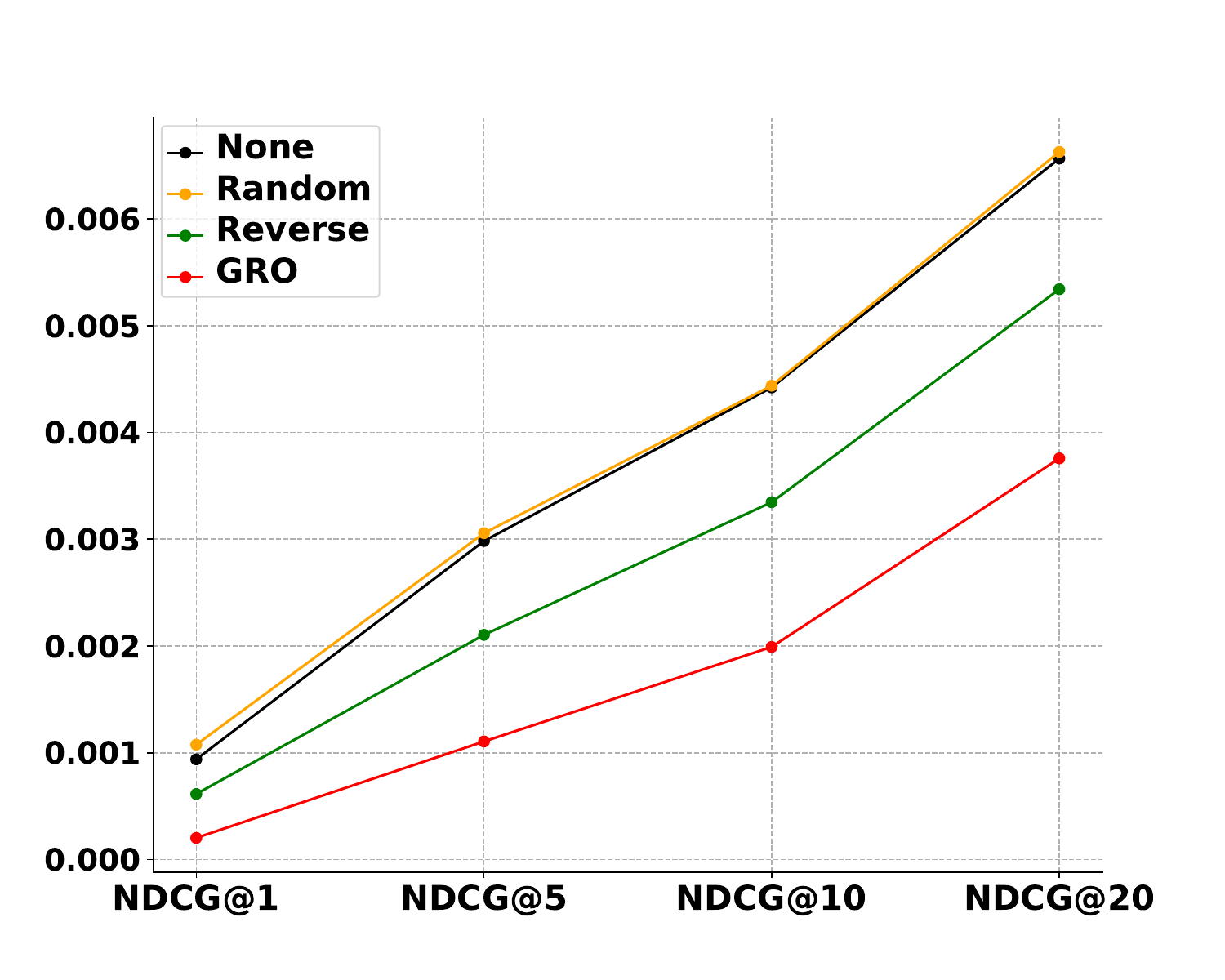}\label{fig:ml20m surrogate ndcg}} \\
    \vspace{-1.3em}
    \subfloat[Steam Target Model HR]{\includegraphics[width=0.25\linewidth]{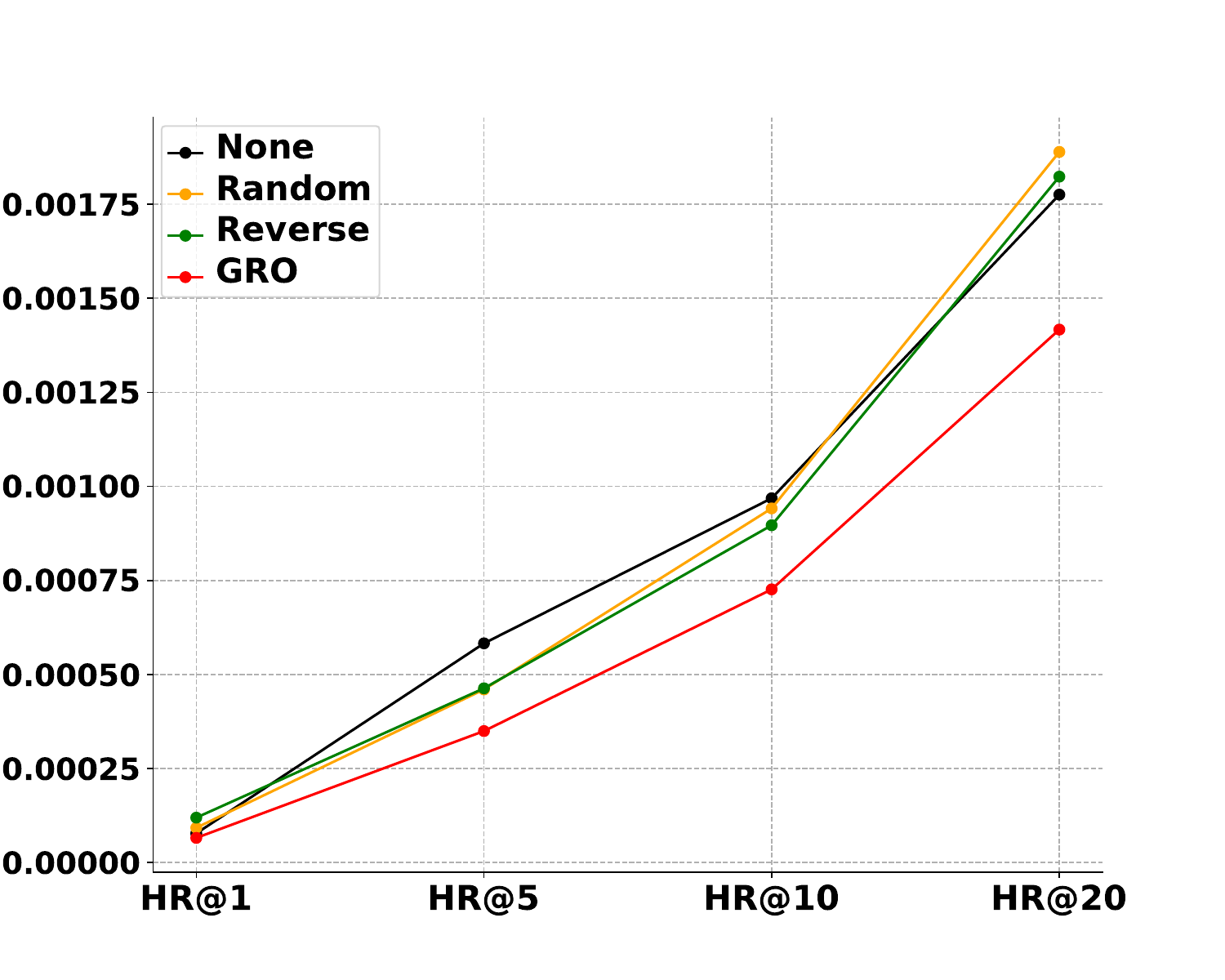}\label{fig:steam target hr}}
    \subfloat[Steam Target Model NDCG]{\includegraphics[width=0.25\linewidth]{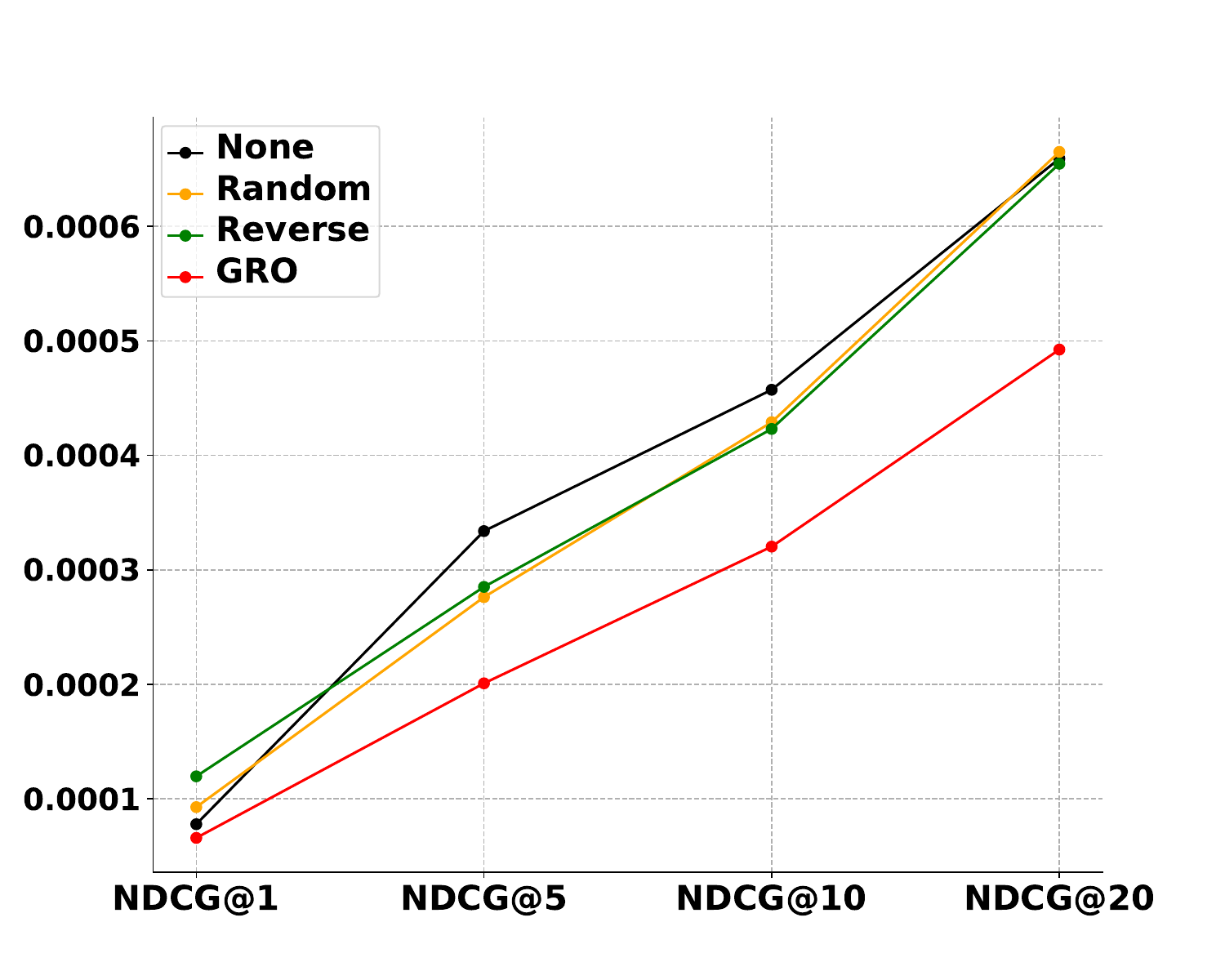}\label{fig:steam target ndcg}}
	\subfloat[Steam Surrogate Model HR]{\includegraphics[width=0.25\linewidth]{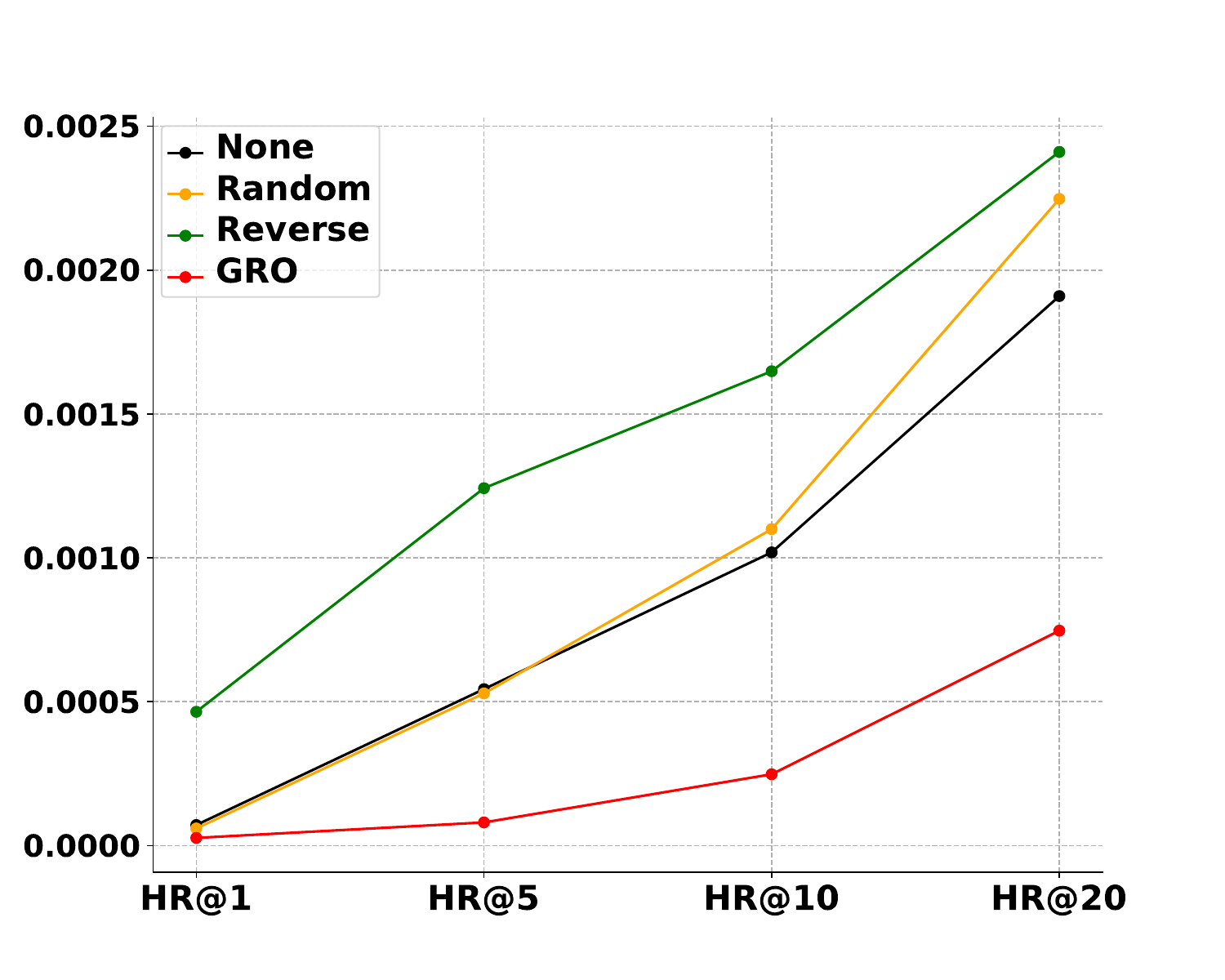}\label{fig:steam surrogate hr}}
	\subfloat[Steam Surrogate Model NDCG]{\includegraphics[width=0.25\linewidth]{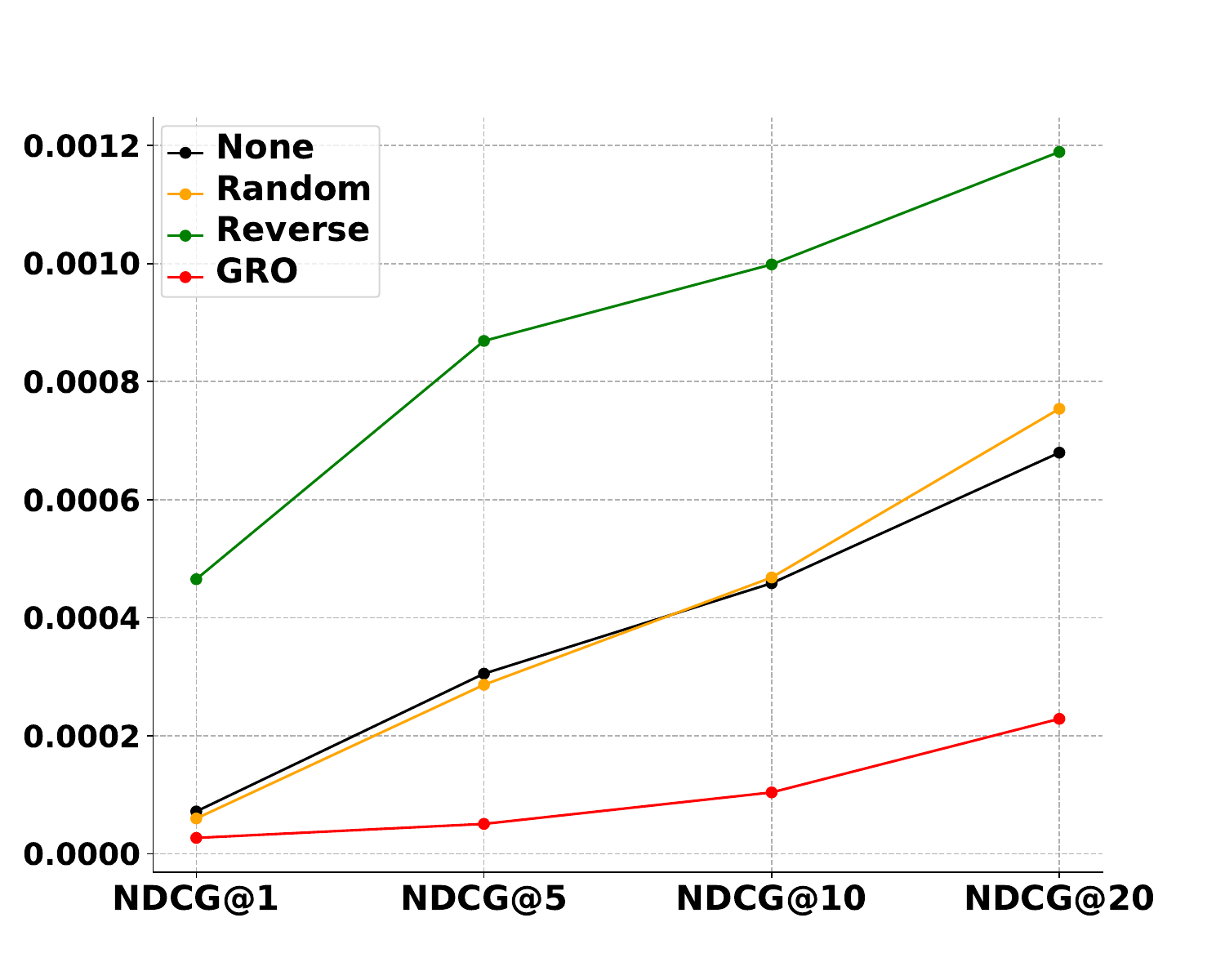}\label{fig:steam surrogate ndcg}}
\vspace{-1.3em}
\caption{Recommendation performance of the target model and the surrogate model. Both models are Bert4Rec.}
\label{fig:main exp}
\vspace{-1.3em}
\end{figure*}

\begin{figure*}[t]
\centering
     \subfloat[ML-1M Target Model HR]{\includegraphics[width=0.25\linewidth]{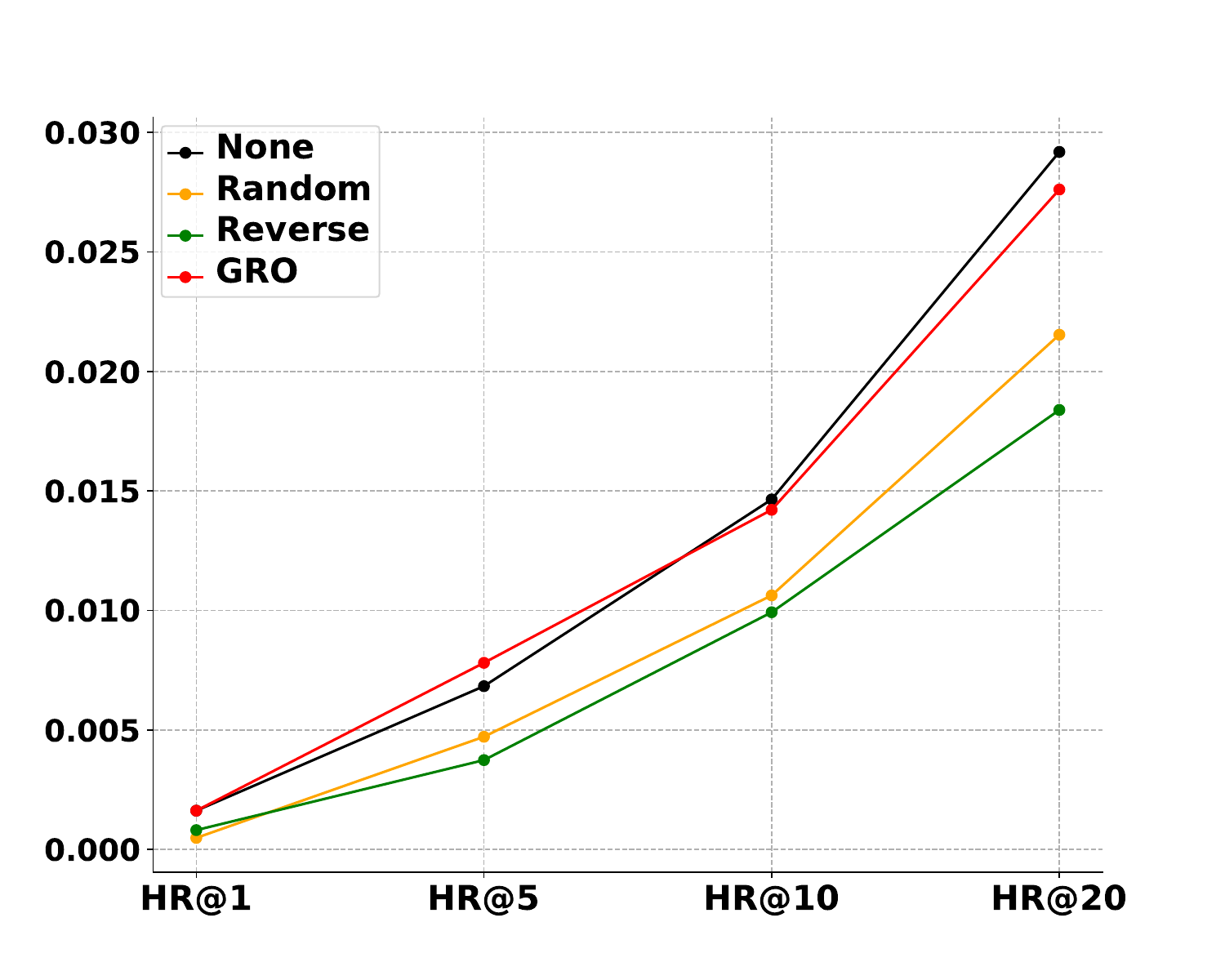}\label{fig:bert2narm ml1m target hr}}
    \subfloat[ML-1M Target Model NDCG]{\includegraphics[width=0.25\linewidth]{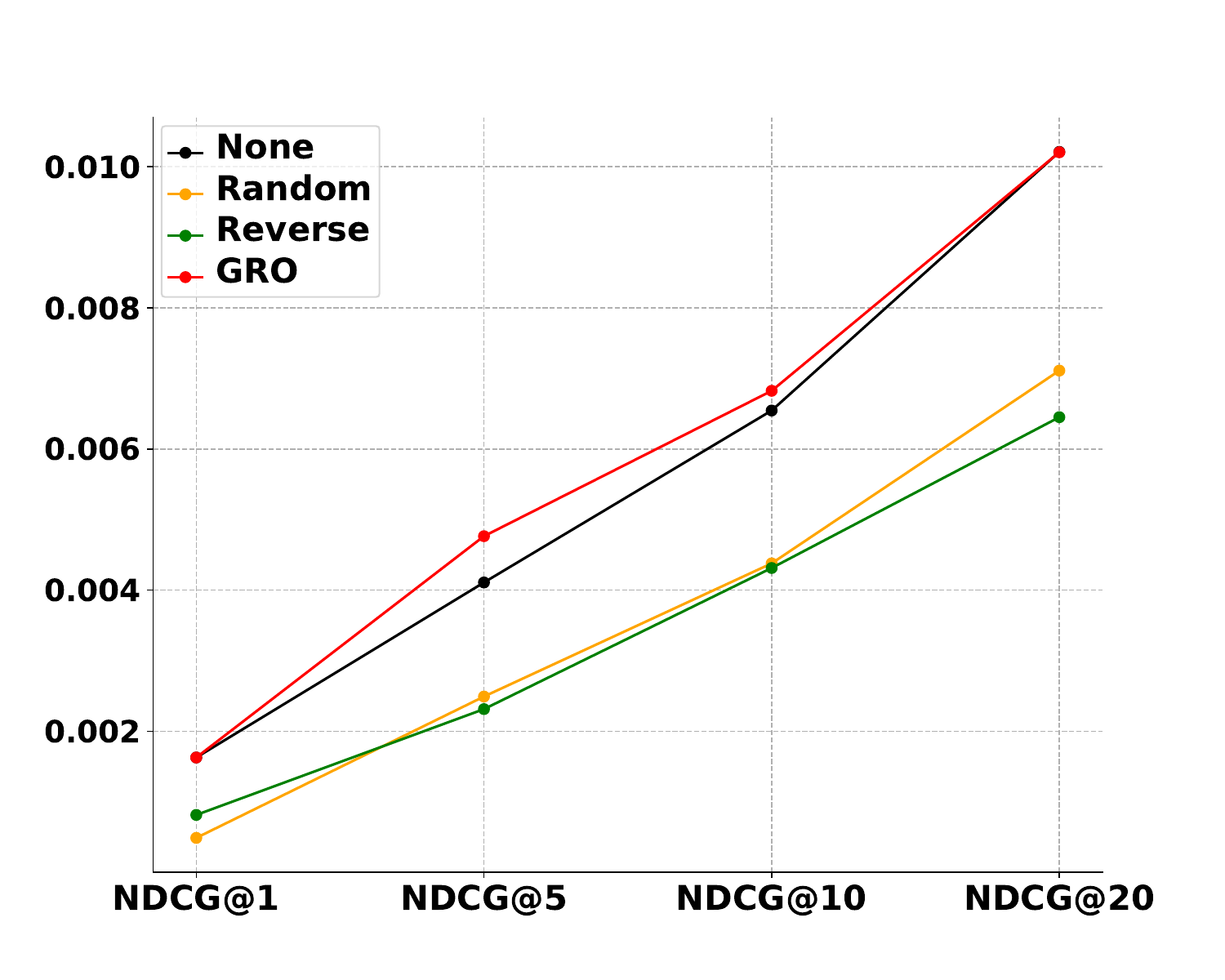}\label{fig:bert2narm ml1m target ndcg}}
	\subfloat[ML-1M Surrogate Model HR]{\includegraphics[width=0.25\linewidth]{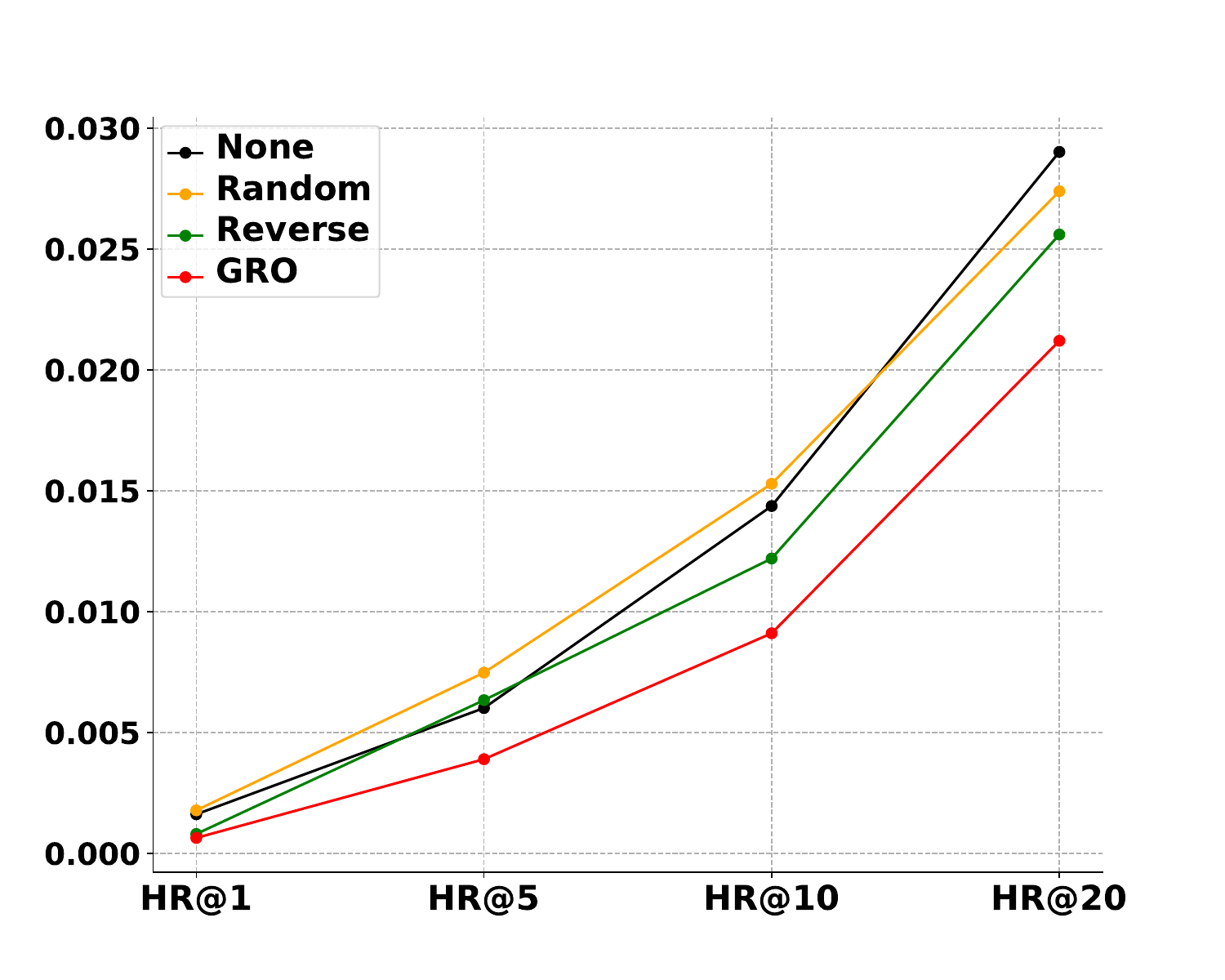}\label{fig:bert2narm ml1m surrogate hr}}
	\subfloat[ML-1M Surrogate Model NDCG]{\includegraphics[width=0.25\linewidth]{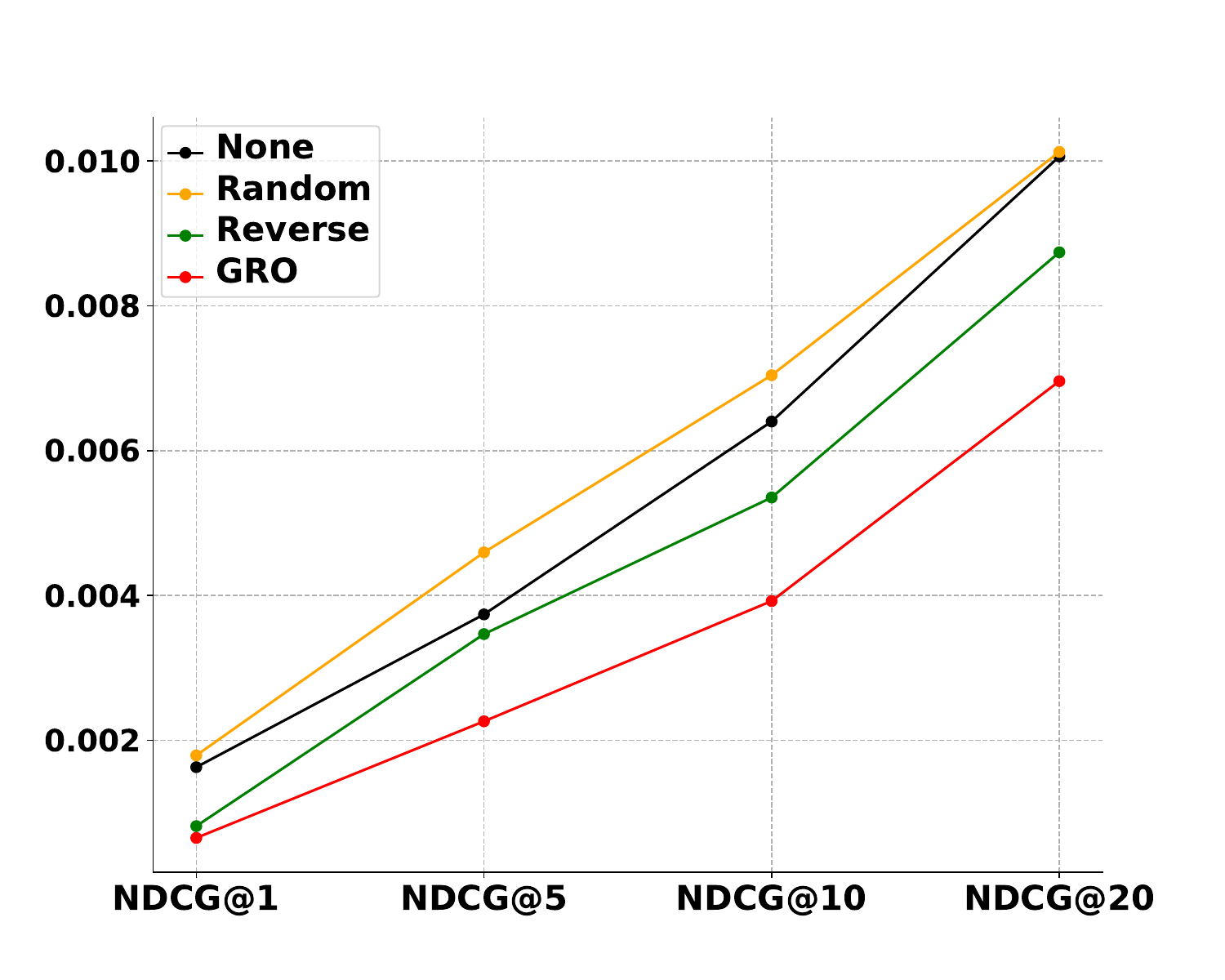}\label{fig:bert2narm ml1m surrogate ndcg}} \\
    \vspace{-1.3em}
    \subfloat[Steam Target Model HR]{\includegraphics[width=0.25\linewidth]{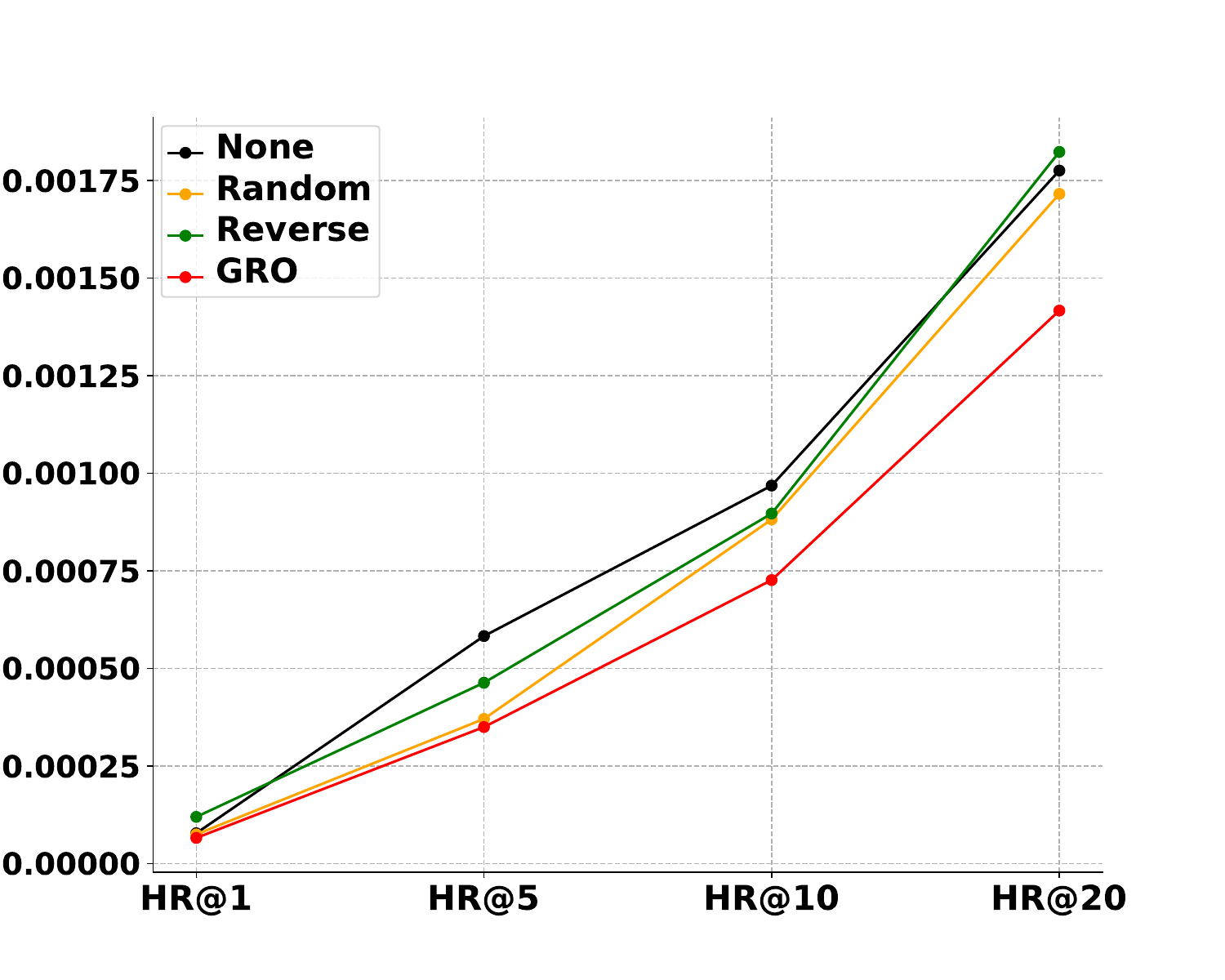}\label{fig:bert2narm steam target hr}}
    \subfloat[Steam Target Model NDCG]{\includegraphics[width=0.25\linewidth]{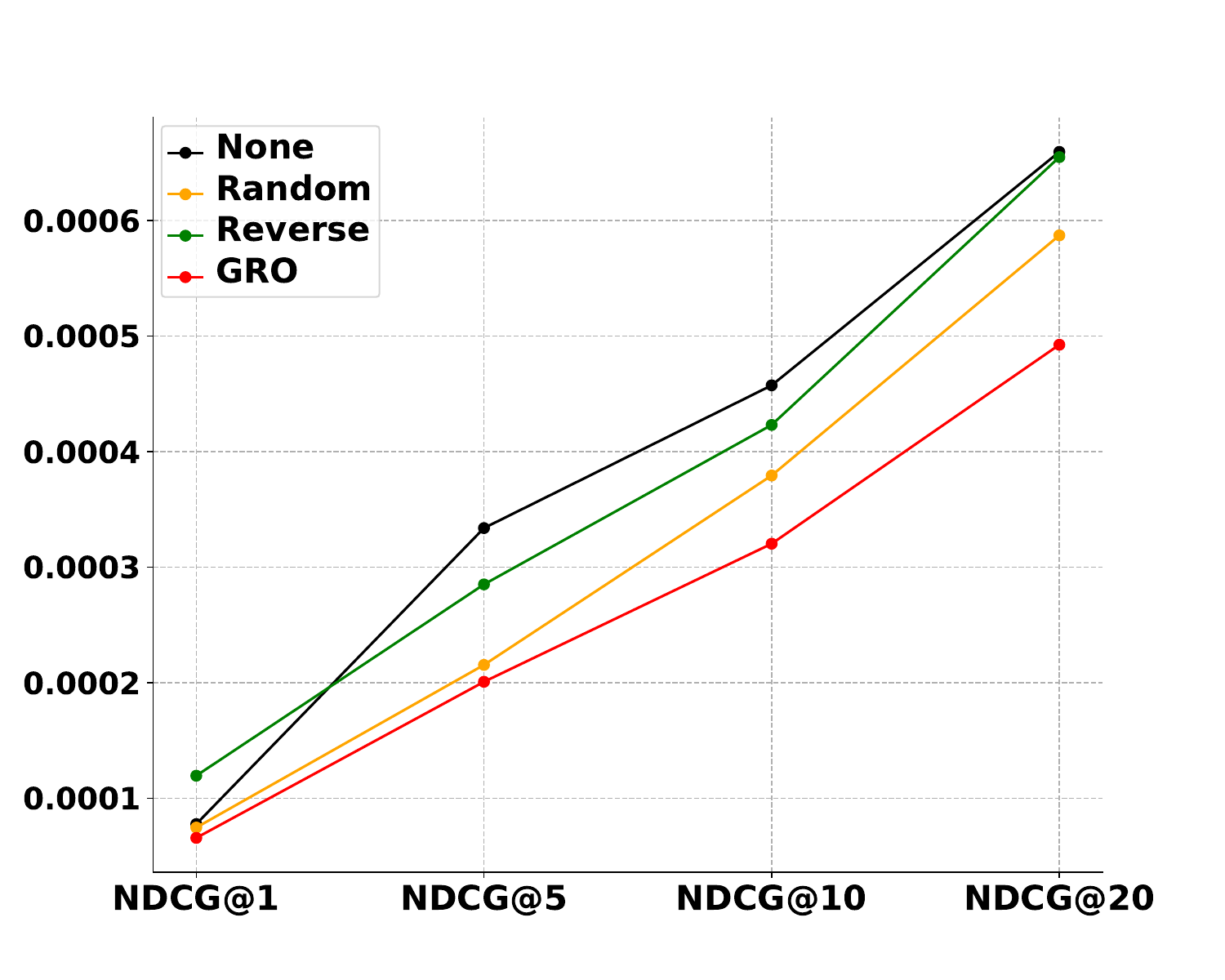}\label{fig:bert2narm steam target ndcg}}
	\subfloat[Steam Surrogate Model HR]{\includegraphics[width=0.25\linewidth]{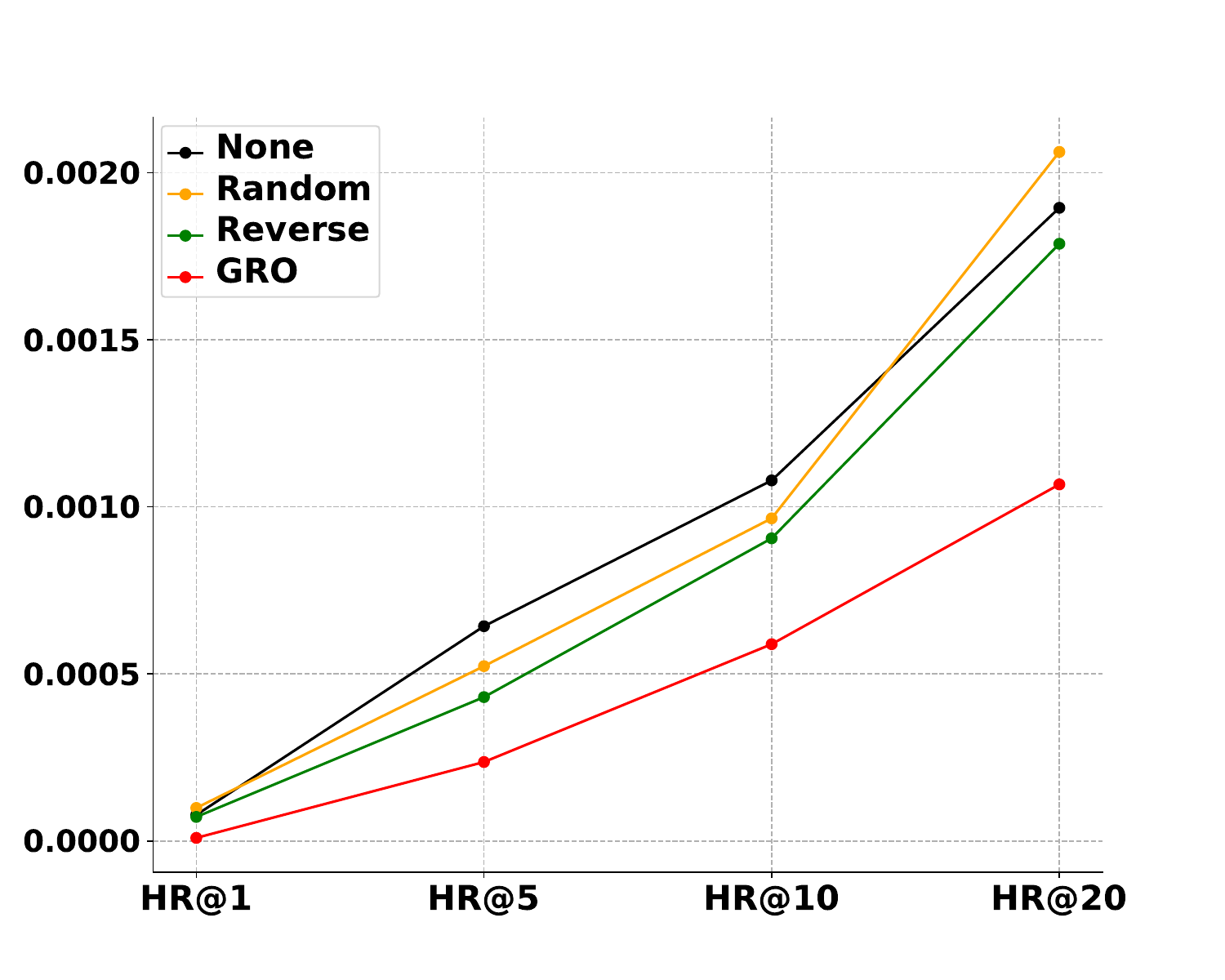}\label{fig:bert2narm steam surrogate hr}}
	\subfloat[Steam Surrogate Model NDCG]{\includegraphics[width=0.25\linewidth]{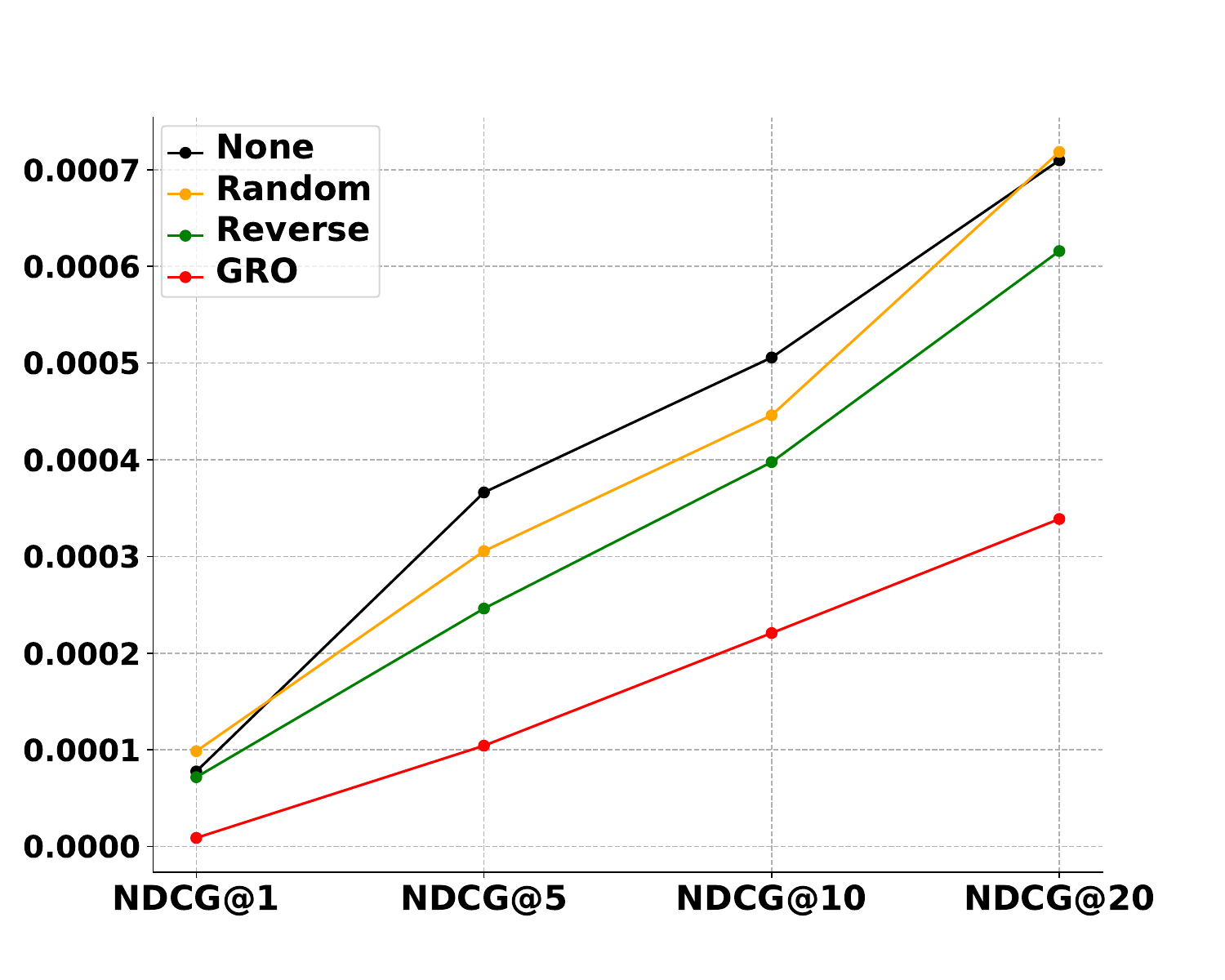}\label{fig:bert2narm steam surrogate ndcg}}
\vspace{-1.3em}
\caption{Recommendation performance of the target model and the surrogate model. The target model is Bert4Rec, while the surrogate model is NARM.}
\label{fig:different models}
\vspace{-1.3em}
\end{figure*}

\subsection{Datasets}
We use three benchmark datasets, namely MovieLens-1M, MovieLens-20M\footnote{\url{https://grouplens.org/datasets/movielens/}}, and Steam\footnote{\url{https://cseweb.ucsd.edu/~jmcauley/datasets.html\#steam_data}}. The statistics of the datasets are summarized in \autoref{table:datasets}. Each user has one sequence of interaction history. We follow previous works \cite{sun2019bert4rec, yue2021black} to use leave-one-out evaluation. The last two items in each sequence are used as the validation set and the test set respectively.


\subsection{Baselines}
We compare GRO with three baselines, namely None, Random and Reverse.
\begin{itemize}
    \item \textbf{None}. We do not apply any defense method to the target model.
    \item \textbf{Random}. We randomly shuffle the top-k recommendation list returned by the target model. The shuffled list is the final output of the recommender system and is what the attacker observes.
    \item \textbf{Reverse}. We reverse the ordering of the top-k recommendation list. For example, the k-th item becomes the 1st, and the (k-1)-th item becomes the 2nd. This is a strong baseline to defend against model extraction attacks while preserving the utility of the target model. It can be regarded as one of the worst cases of Random.
\end{itemize}

\begin{figure*}[t]
\centering
     \subfloat[ML-20M Target Model HR]{\includegraphics[width=0.25\linewidth]{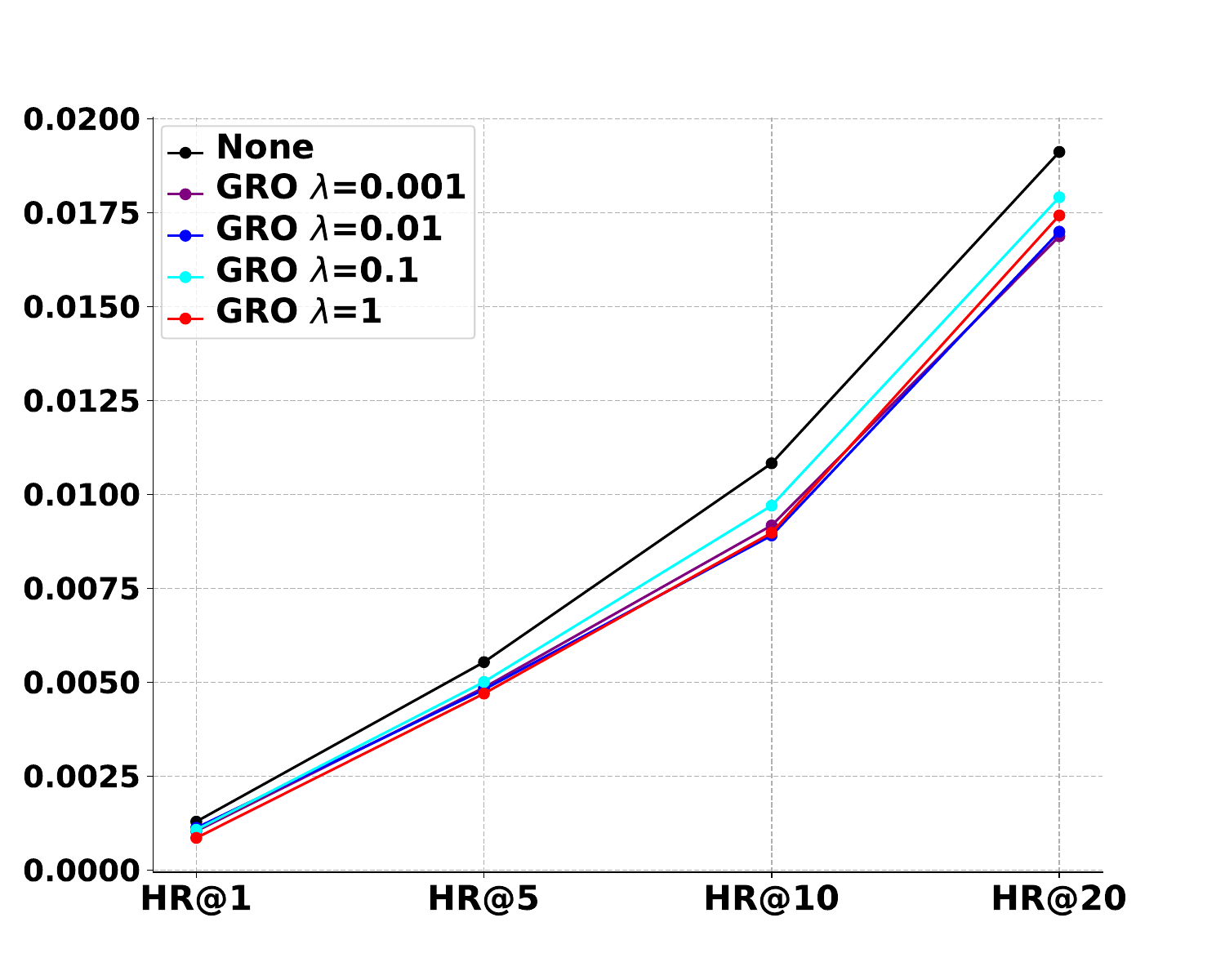}\label{fig:ml20m target hr lambda}}
    \subfloat[ML-20M Target Model NDCG]{\includegraphics[width=0.25\linewidth]{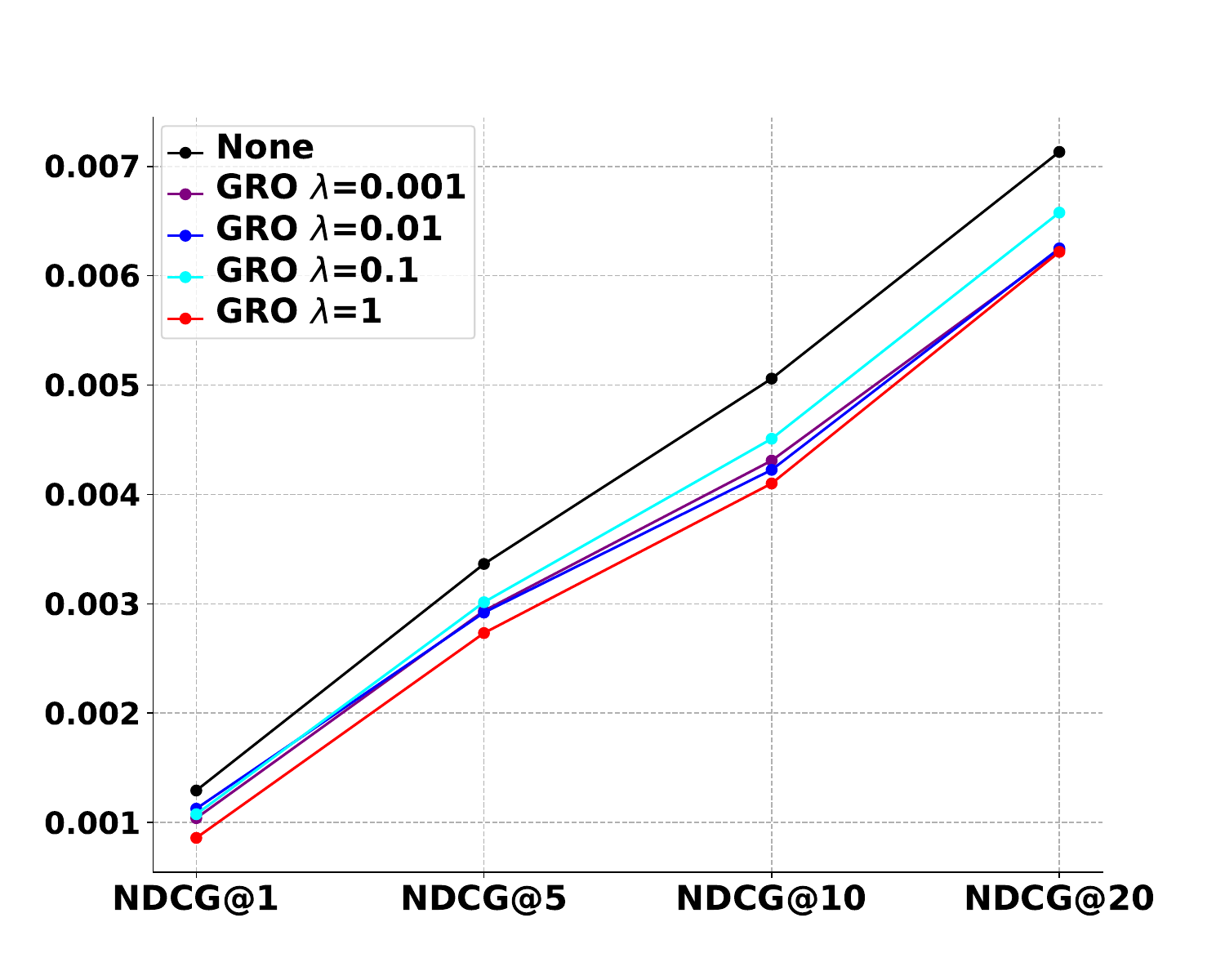}\label{fig:ml20m target ndcg lambda}}
	\subfloat[ML-20M Surrogate Model HR]{\includegraphics[width=0.25\linewidth]{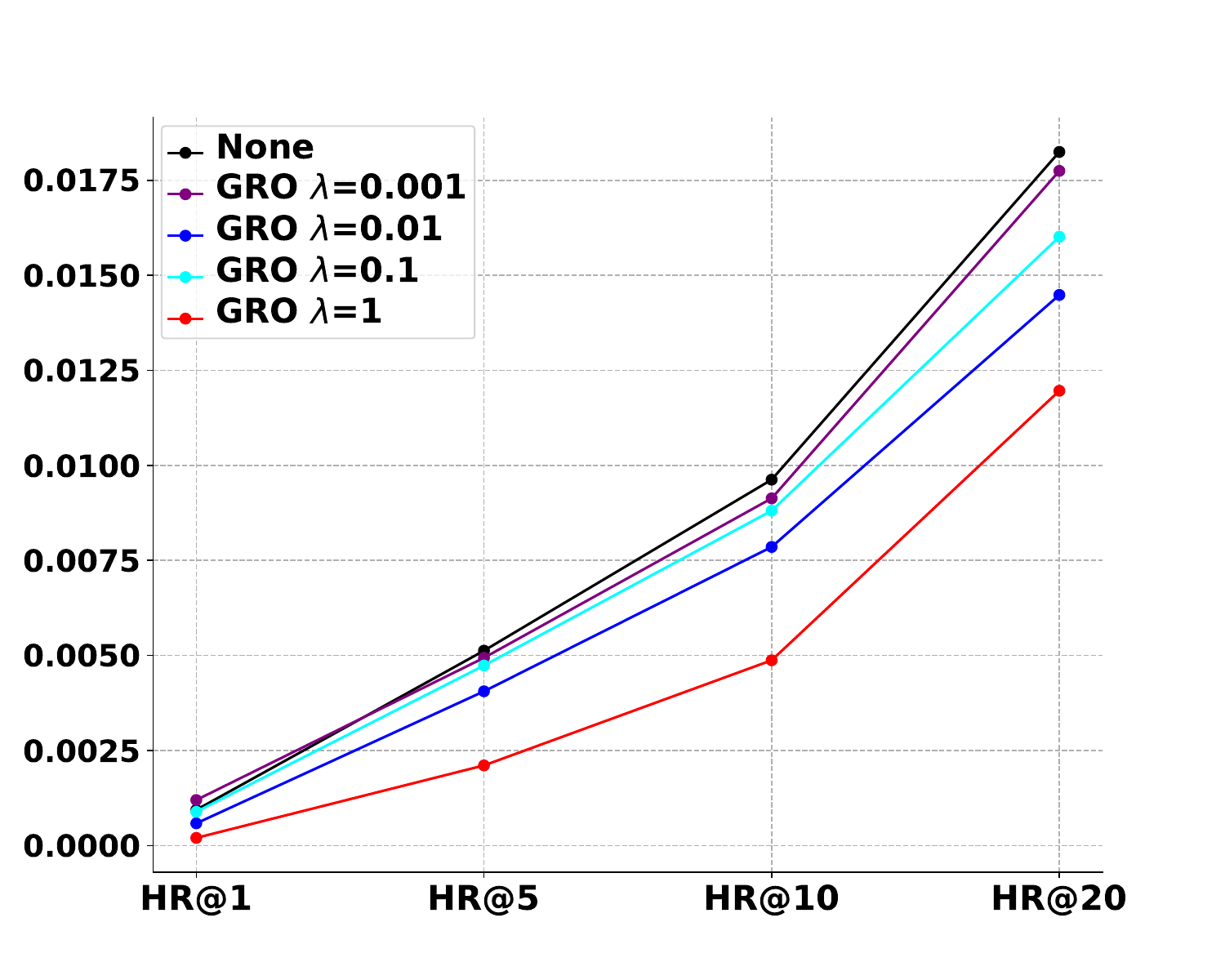}\label{fig:ml20m surrogate hr lambda}}
	\subfloat[ML-20M Surrogate Model NDCG]{\includegraphics[width=0.25\linewidth]{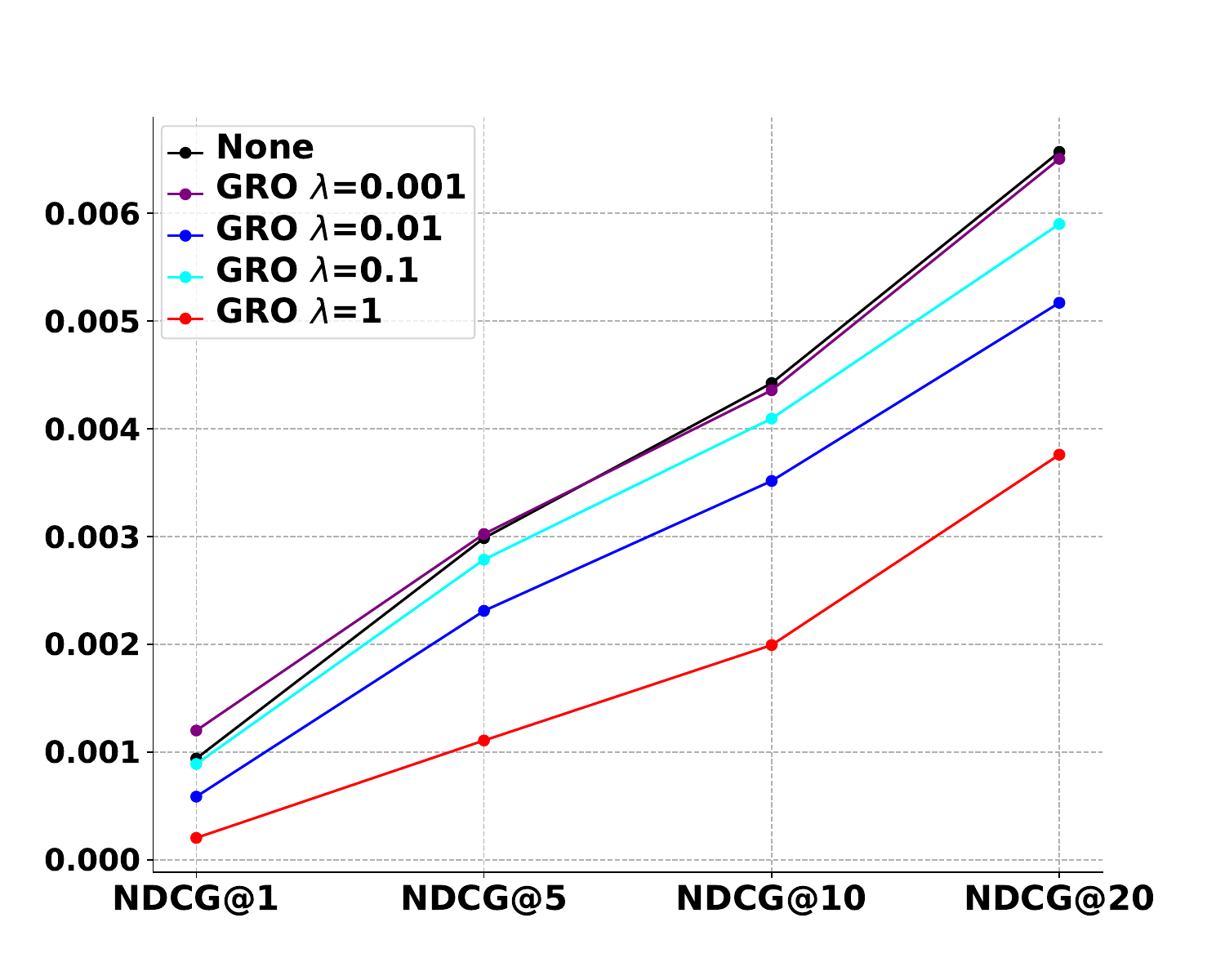}\label{fig:ml20m surrogate ndcg lambda}} \\
    \vspace{-1.3em}
    \subfloat[Steam Target Model HR]{\includegraphics[width=0.25\linewidth]{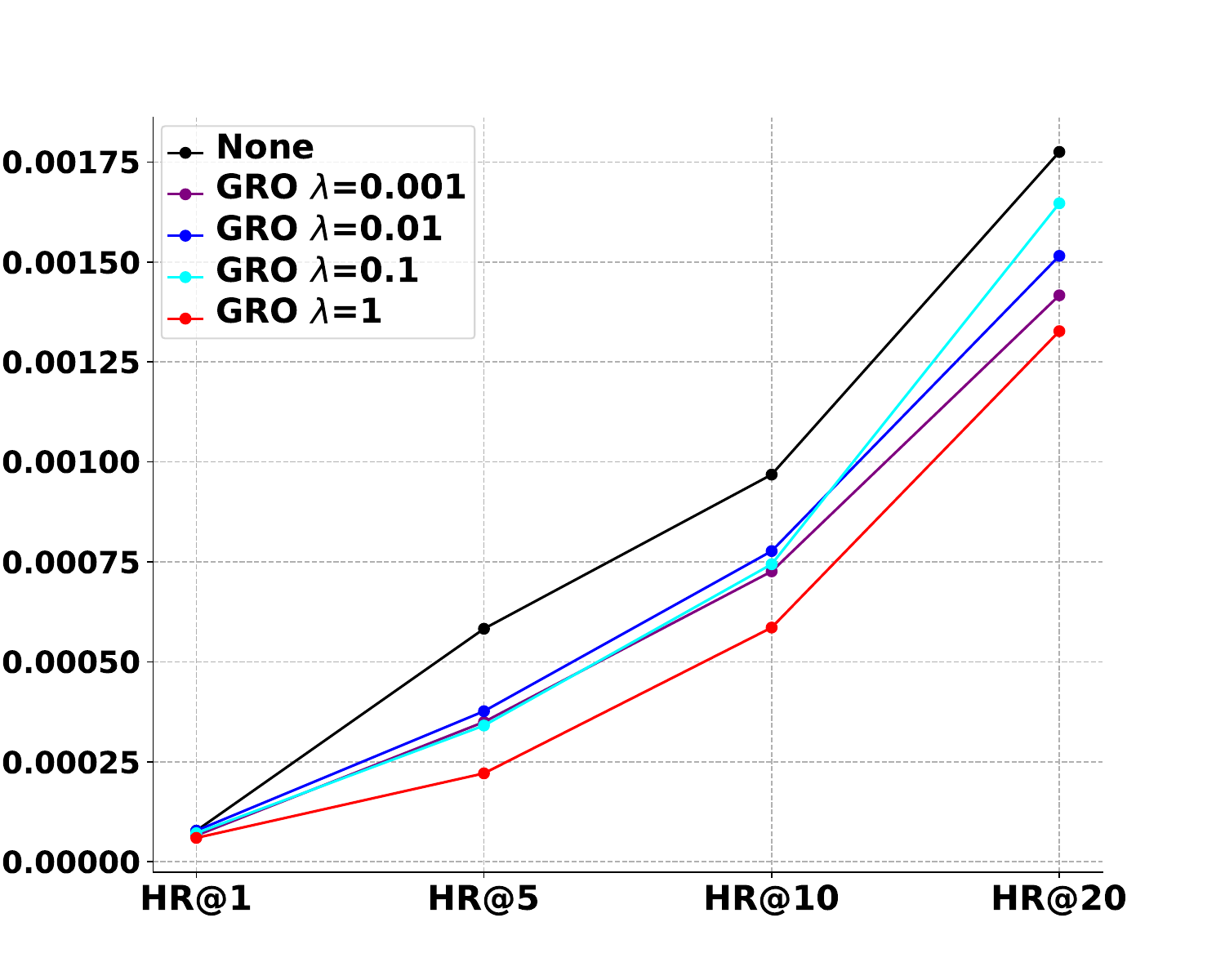}\label{fig:steam target hr lambda}}
    \subfloat[Steam Target Model NDCG]{\includegraphics[width=0.25\linewidth]{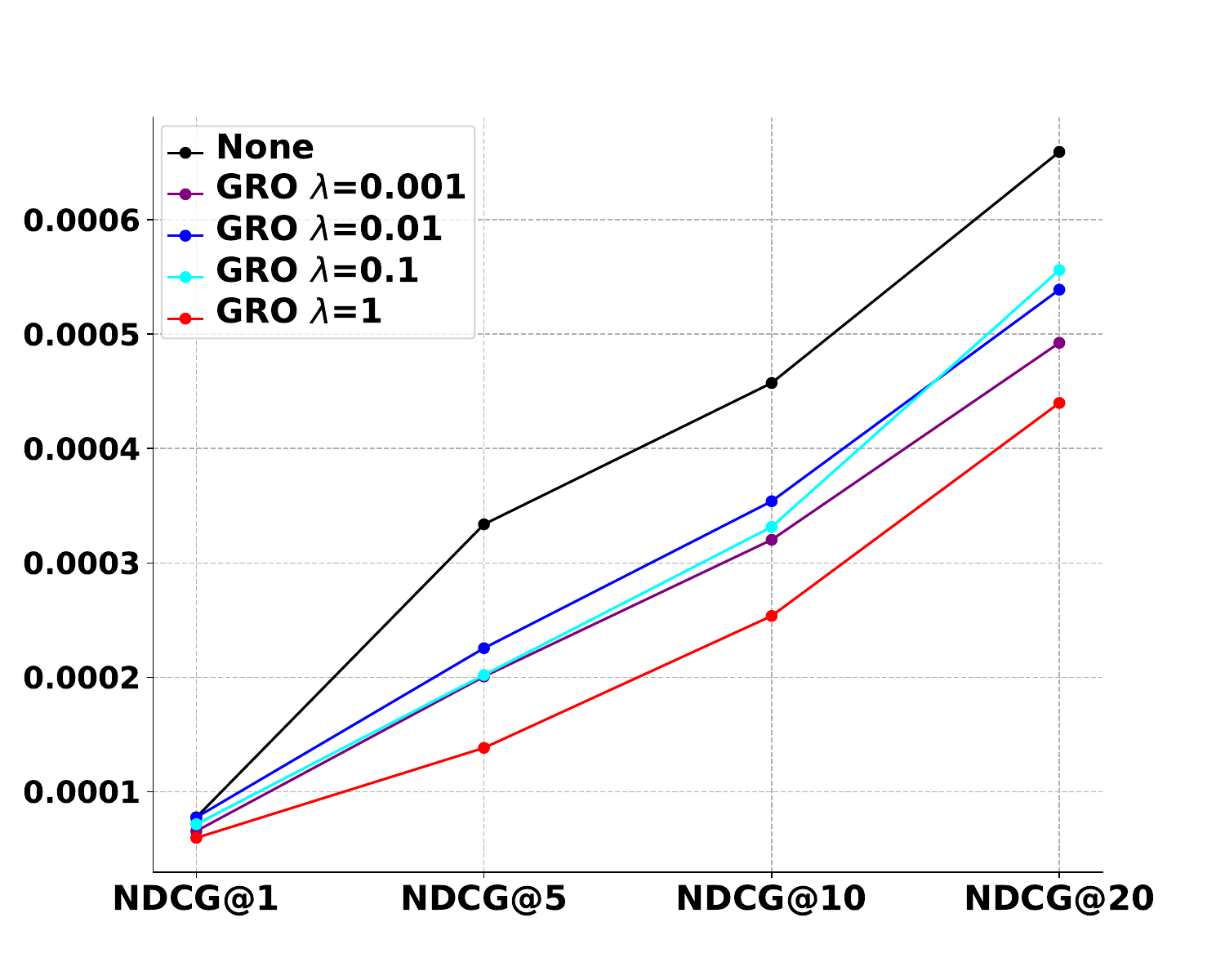}\label{fig:steam target ndcg lambda}}
	\subfloat[Steam Surrogate Model HR]{\includegraphics[width=0.25\linewidth]{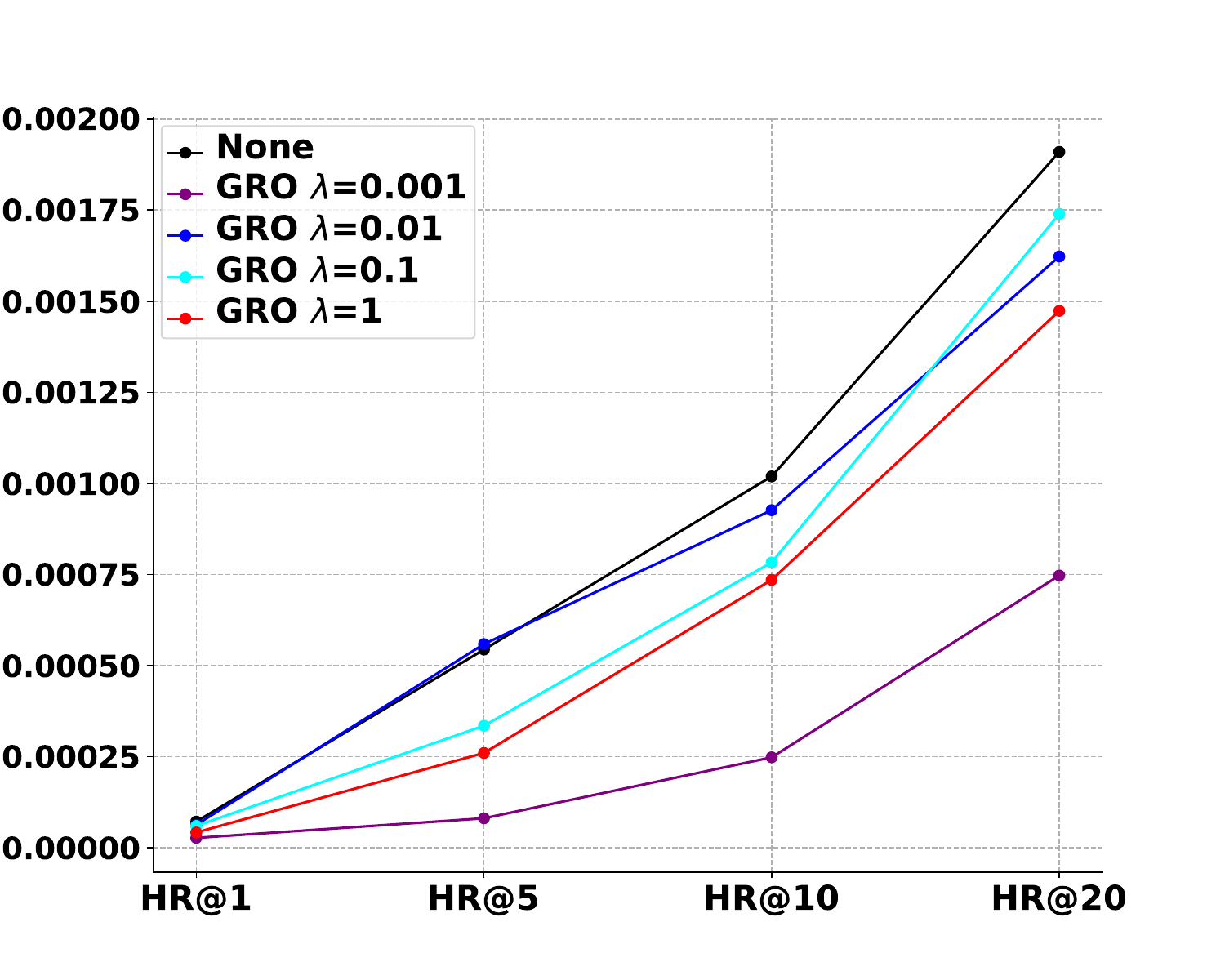}\label{fig:steam surrogate hr lambda}}
	\subfloat[Steam Surrogate Model NDCG]{\includegraphics[width=0.25\linewidth]{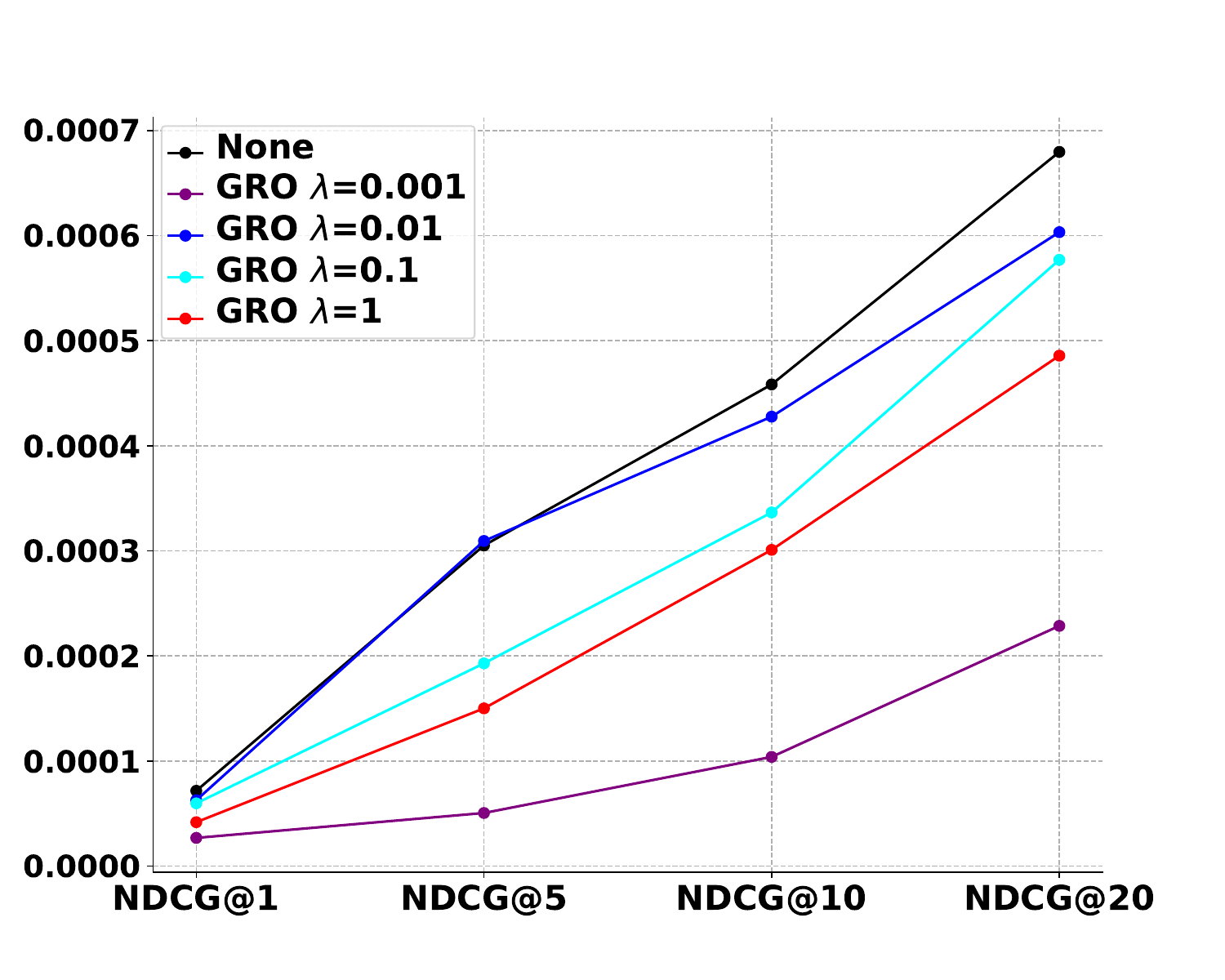}\label{fig:steam surrogate ndcg lambda}}
\vspace{-1.3em}
\caption{Recommendation performance of the target model and the surrogate model with different $\lambda$ values.}
\label{fig:lambda}
\vspace{-1.3em}
\end{figure*}

\begin{figure*}[t]
\centering
     \subfloat[Target Model HR, 1000 seqs]{\includegraphics[width=0.25\linewidth]{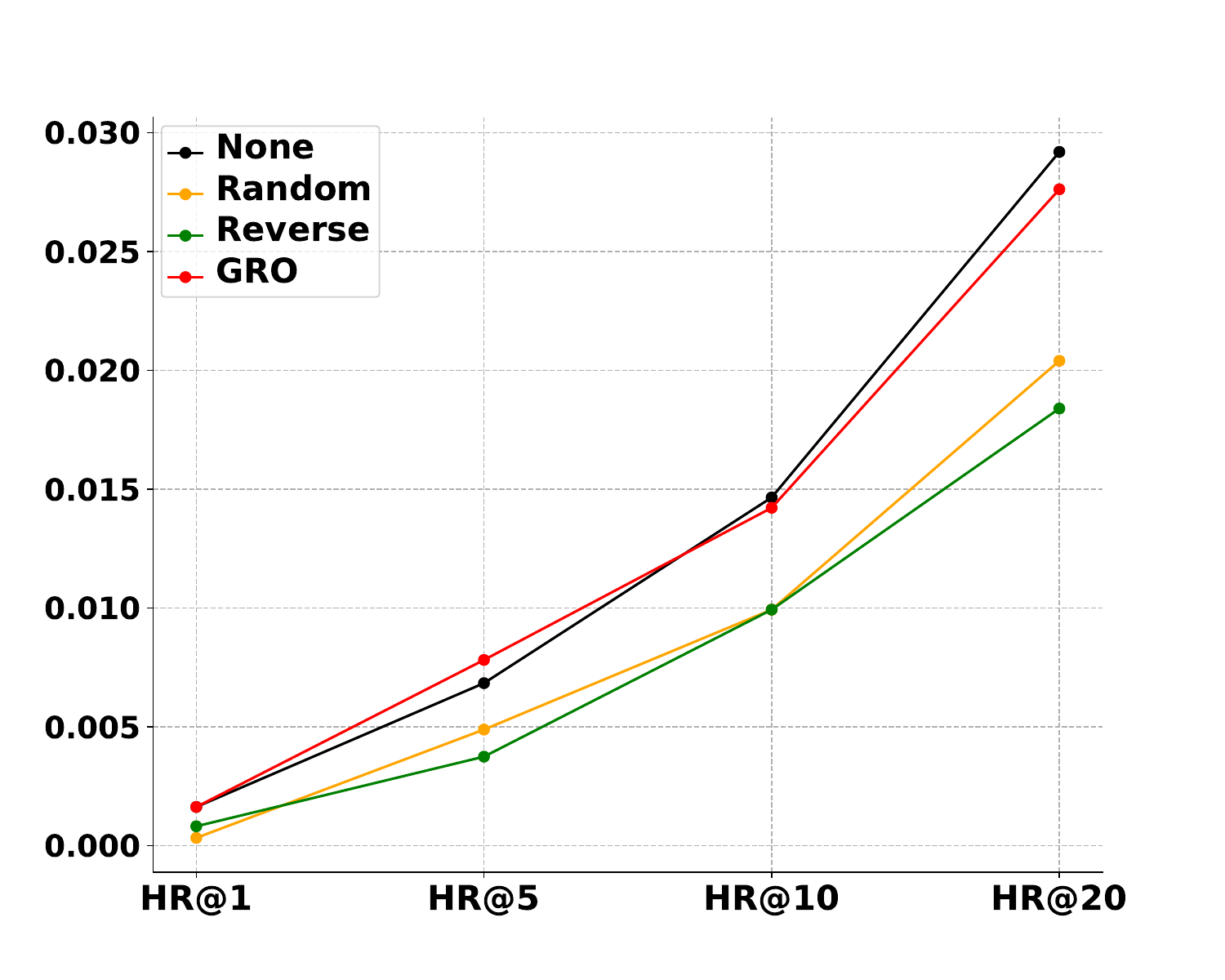}\label{fig:1000 target hr}}
	\subfloat[Surrogate Model HR, 1000 seqs]{\includegraphics[width=0.25\linewidth]{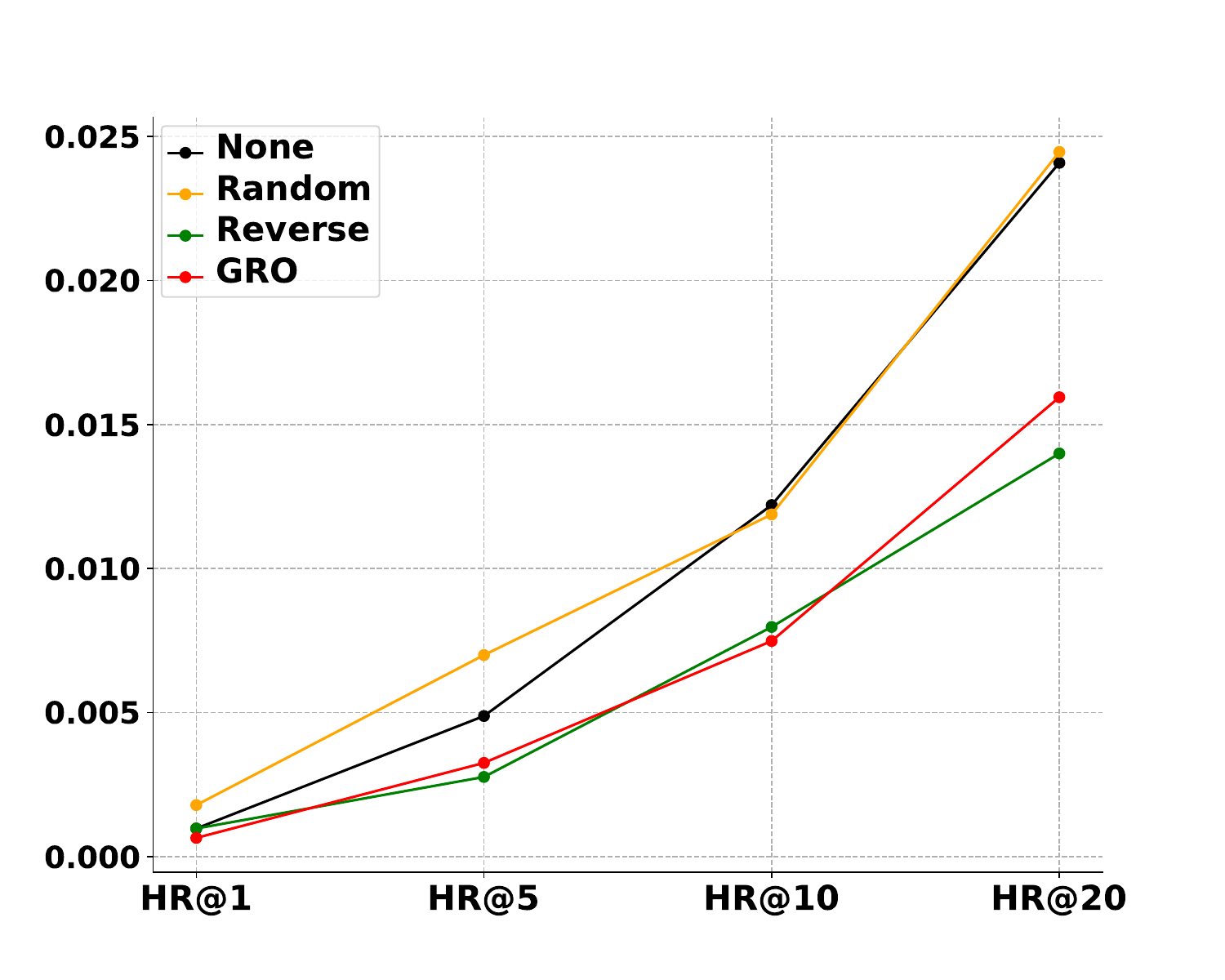}\label{fig:1000 surrogate hr}}
    \subfloat[Target Model HR, 5000 seqs]{\includegraphics[width=0.25\linewidth]{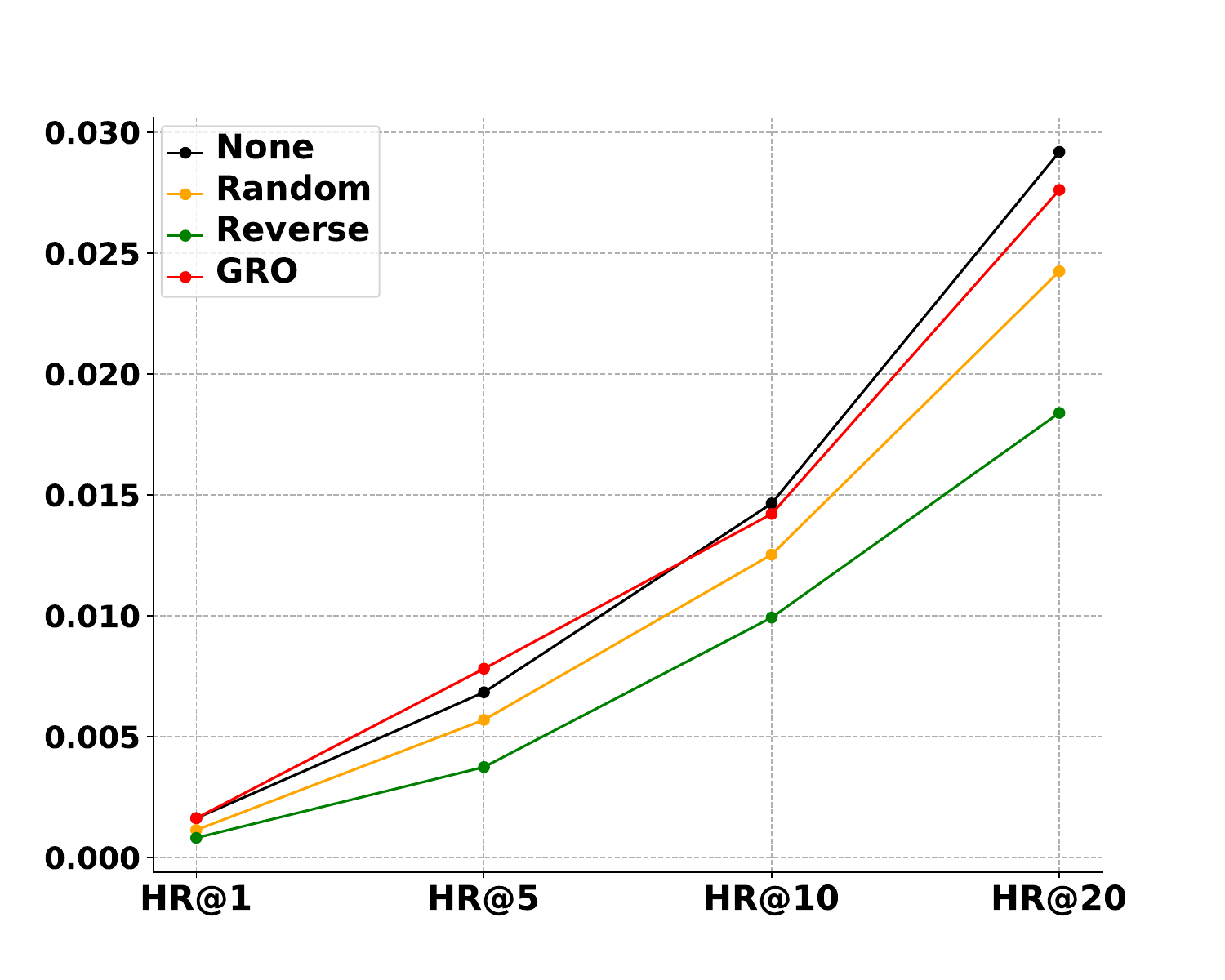}\label{fig:5000 target hr}}
	\subfloat[Surrogate Model HR, 5000 seqs]{\includegraphics[width=0.25\linewidth]{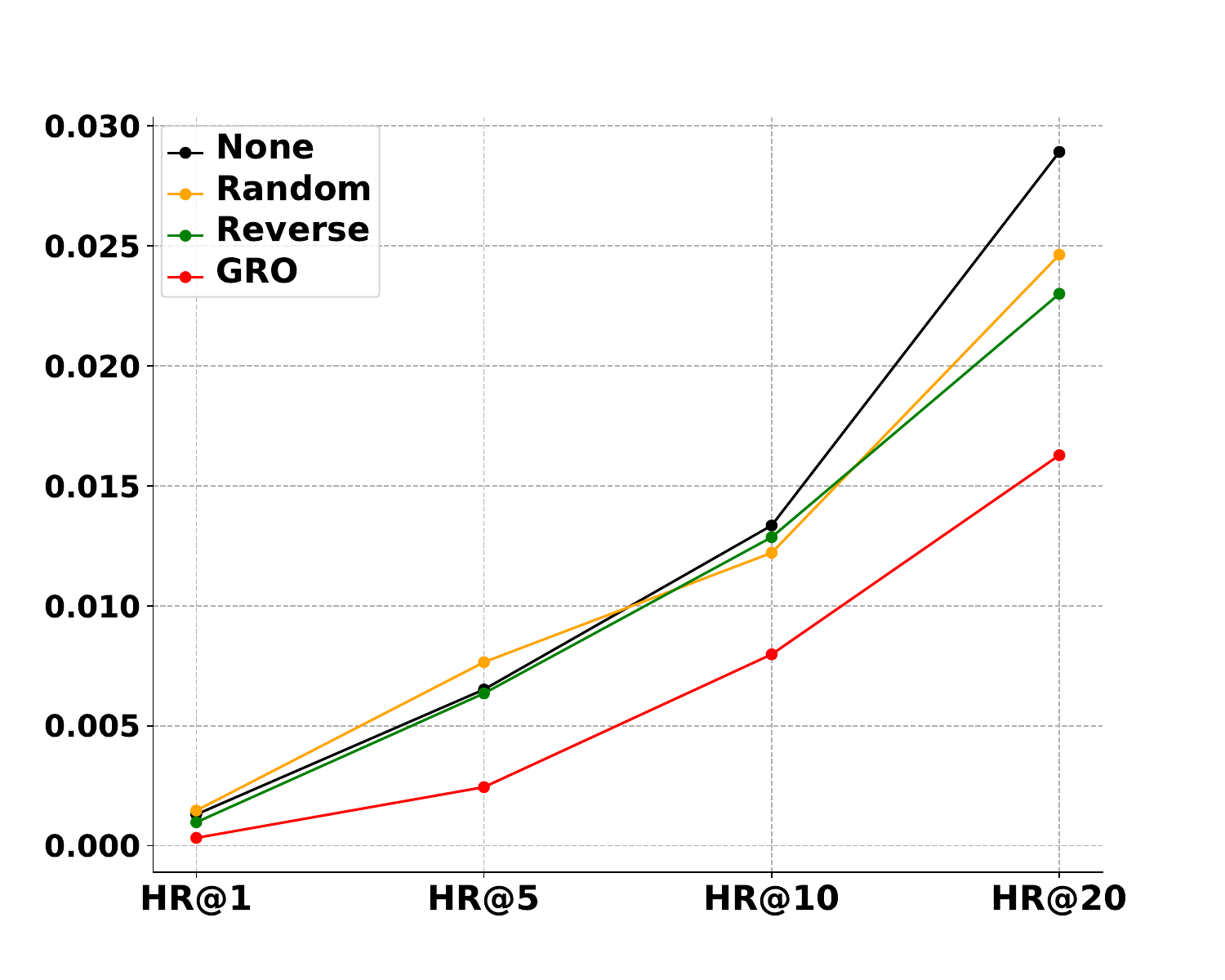}\label{fig:5000 surrogate hr}}
\vspace{-1.3em}
\caption{Recommendation performance of the target model and the surrogate model on ML-1M with different number of sequences.}
\label{fig:number seqs}
\vspace{-1.3em}
\end{figure*}

\subsection{Evaluation Metrics}
We use the Hit Ratio@k (HR@k) and Normalized Discounted Cumulative Gain@k (NDCG@k) as the evaluation metrics to evaluate the utility of the recommender systems. The k here denotes the top-k recommendation. For example, HR@10 means the hit ratio of the top-10 recommendation list. HR measures the true positive rate of the top-k recommendation list, i.e., how many target items in the test set are in the top-k list. NDCG further takes the ranking position into consideration, i.e., NDCG is larger if the target items are ranked higher. Both evaluation metrics indicate a better performance if the number is larger.

\subsection{Settings}
We follow the model-specific hyper-parameter settings for each dataset as suggested by the original Bert4Rec implementation \cite{sun2019bert4rec}. For the model extraction attack, we also follow the original implementation and hyper-parameter setting \cite{yue2021black}. We tune the batch size among \{16, 32, 64, 128\}, and tune the $\lambda$ in \autoref{eq:final loss} among \{0.001, 0.01, 0.1, 1.0\}. We assume that the target model can return 100 ranked items for each query sequence, based on which we train the surrogate model. The number of sequences is 3000, which means that the attacker generates 3000 sequences to query the target model. These sequences are generated autoregressively, as suggested in \cite{yue2021black}, where the next item is selected randomly according to the recommendation list returned by the target model.

Previous works for sequential recommendation sample 100 negative items for each test item, and rank the 100 negative items together with the test item to compute the metrics. This may bring sampling bias to the result. We instead rank all the items in the dataset to compute the metrics, which can lead to a more accurate and comprehensive comparison between different methods.

Training a model with GRO is more time-consuming than without it due to the gradient computation. Therefore, we first train the target model until convergence without GRO, then we apply GRO to the converged model and continue to train it for more epochs until convergence. This can significantly speed up the training process. 

\subsection{Results of Identical Model Architectures}
\autoref{fig:main exp} shows the experiment results when both the target model and the surrogate model are Bert4Rec under different defense methods.

For the first two datasets, ML-1M and ML-20M, GRO significantly outperforms the baselines by being able to both preserve the utility of the target model, and decrease the utility of the surrogate model. According to \autoref{fig:ml1m target hr}, \autoref{fig:ml1m target ndcg}, \autoref{fig:ml20m target hr}, \autoref{fig:ml20m target ndcg}, the performance of the target model under GRO is consistently better than Random and Reverse. Then we can see from \autoref{fig:ml1m surrogate hr}, \autoref{fig:ml1m surrogate ndcg}, \autoref{fig:ml20m surrogate hr}, \autoref{fig:ml20m surrogate ndcg}, the attacker's surrogate model performs the worst against the target model that is protected by GRO.

For Steam, the observation is a little bit different, but still demonstrates the superiority of GRO. In \autoref{fig:steam target hr} and \autoref{fig:steam target ndcg}, GRO seems slightly worse than Random and Reverse in preserving the utility of the target model. However, in \autoref{fig:steam surrogate hr} and \autoref{fig:steam surrogate ndcg}, Random and Reverse fail to defend against the model extraction attack. The surrogate model is comparable or even outperforms the target model under Random and Reverse. Instead, GRO can significantly reduce the utility of the surrogate model. Therefore, for Steam, GRO is still successful in defending against the model extraction attack with acceptable sacrifice of the target model's utility.

\subsection{Results of Different Model Architectures}
\autoref{fig:different models} shows the results when the target model is Bert4Rec and the surrogate model is NARM. To be clear, the student model is also Bert4Rec, which is assumed to be consistent with the target model. We show the results on ML-1M and Steam, while ML-20M has similar results. We can observe that GRO is still the best defense method in this case. For ML-1M, GRO not only preserves the utility of the target model, but also significantly reduces the utility of the surrogate model. For Steam, GRO is comparable or slightly worse in preserving the utility of the target model, but it can effectively fool the surrogate model.

\subsection{Analysis of the Choice of $\lambda$}
We further show how different choices of $\lambda$ influence the performance of GRO. \autoref{fig:lambda} shows the experiment results for ML-20M and Steam under different $\lambda$ values, where both the target model and the surrogate model are Bert4Rec. For ML-20M, the best $\lambda$ is 1.0. While for Steam, the best $\lambda$ is 0.001. Besides, the best $\lambda$ for ML-1M is 0.1. We can see that the performance of GRO has no obvious correlation with $\lambda$. Different datasets require different $\lambda$ values in order for GRO to reach the best performance. However, we observe that the best $\lambda$ seems to be correlated with the magnitude of the swap loss. For example, after convergence, the swap loss of ML-20M is close to 1, while the best $\lambda$ is 1; the swap loss of ML-1M is close to 0.1, while the best $\lambda$ is 0.1; the swap loss of Steam is close to 0.001, while the best $\lambda$ is 0.001. However, this needs further investigations and justifications. We leave this for future work.

\subsection{Results of Different Numbers of Sequences}
We show how the number of sequences influence the performance of GRO in \autoref{fig:number seqs}. We present the result of HR on ML-1M. The NDCG presents a similar trend. Other datasets also perform similarly. Since experiments in \autoref{fig:main exp} are conducted with 3000 sequences, we here test the performance of 1000 sequences and 5000 sequences. \autoref{fig:1000 target hr} and \autoref{fig:1000 surrogate hr} show the HRs of the target model and the surrogate model with 1000 sequences, while \autoref{fig:5000 target hr} and \autoref{fig:5000 surrogate hr} show those with 5000 sequences. For both experiments, GRO significantly outperforms Random and Reverse in preserving the utility of the target model. As for the surrogate model, GRO is consistently better than Random in fooling the surrogate model with both 1000 sequences and 5000 sequences. As for Reverse, according to \autoref{fig:1000 surrogate hr}, with 1000 sequences, Reverse is comparable with GRO in fooling the surrogate model. However, in \autoref{fig:5000 surrogate hr}, with 5000 sequences, Reverse performs much worse than before, failing to protect the target model. Note that although Reverse works well in \autoref{fig:1000 surrogate hr}, the corresponding target model's utility is much worse in \autoref{fig:1000 target hr}. Together with the experiments of 3000 sequences in \autoref{fig:main exp}, we can draw a conclusion that GRO is still the best defense method even if the number of sequences varies.

\section{Conclusion}\label{sec:con}
We propose Gradient-based Ranking Optimization (GRO), the first defense method against model extraction attacks on recommender systems. We formalize the defense problem as an optimization problem, and convert the non-differentiable top-k rankings into differentiable swap matrices. We use a student model to learn to extract the target model. We calculate the gradients of the swap matrix with respect to the student model's loss, then convert them into new swap matrices which can maximize the student model's loss. We use a swap loss to force the target model to learn to produce similar rankings as the new swap matrices. Extensive experiments show the superior performance of GRO in decreasing the surrogate model's utility while preserving the target model's utility.

\section*{Acknowledgment}
This research is supported by the Ministry of Education, Singapore, under its Academic Research Fund (Tier 2 Award MOE-T2EP20221-0013 and Tier 2 Award MOE-T2EP20220-0011). Any opinions, findings and conclusions or recommendations expressed in this material are those of the author(s) and do not reflect the views of the Ministry of Education, Singapore. This work is partially supported by the Australian Research Council under the streams of Future Fellowship (No. FT210100624) and Discovery Project (No. DP190101985).

\balance
\bibliographystyle{ACM-Reference-Format}
\bibliography{ref.bib}


\begin{thebibliography}{43}


\ifx \showCODEN    \undefined \def \showCODEN     #1{\unskip}     \fi
\ifx \showDOI      \undefined \def \showDOI       #1{#1}\fi
\ifx \showISBNx    \undefined \def \showISBNx     #1{\unskip}     \fi
\ifx \showISBNxiii \undefined \def \showISBNxiii  #1{\unskip}     \fi
\ifx \showISSN     \undefined \def \showISSN      #1{\unskip}     \fi
\ifx \showLCCN     \undefined \def \showLCCN      #1{\unskip}     \fi
\ifx \shownote     \undefined \def \shownote      #1{#1}          \fi
\ifx \showarticletitle \undefined \def \showarticletitle #1{#1}   \fi
\ifx \showURL      \undefined \def \showURL       {\relax}        \fi
\providecommand\bibfield[2]{#2}
\providecommand\bibinfo[2]{#2}
\providecommand\natexlab[1]{#1}
\providecommand\showeprint[2][]{arXiv:#2}

\bibitem[Boenisch(2020)]%
        {boenisch2020survey}
\bibfield{author}{\bibinfo{person}{Franziska Boenisch}.}
  \bibinfo{year}{2020}\natexlab{}.
\newblock \showarticletitle{A survey on model watermarking neural networks}.
\newblock \bibinfo{journal}{\emph{arXiv preprint arXiv:2009.12153}}
  (\bibinfo{year}{2020}).
\newblock


\bibitem[Bordes et~al\mbox{.}(2013)]%
        {bordes2013translating}
\bibfield{author}{\bibinfo{person}{Antoine Bordes}, \bibinfo{person}{Nicolas
  Usunier}, \bibinfo{person}{Alberto Garcia-Duran}, \bibinfo{person}{Jason
  Weston}, {and} \bibinfo{person}{Oksana Yakhnenko}.}
  \bibinfo{year}{2013}\natexlab{}.
\newblock \showarticletitle{Translating embeddings for modeling
  multi-relational data}.
\newblock \bibinfo{journal}{\emph{Advances in neural information processing
  systems}}  \bibinfo{volume}{26} (\bibinfo{year}{2013}).
\newblock


\bibitem[Chakraborty et~al\mbox{.}(2018)]%
        {chakraborty2018adversarial}
\bibfield{author}{\bibinfo{person}{Anirban Chakraborty},
  \bibinfo{person}{Manaar Alam}, \bibinfo{person}{Vishal Dey},
  \bibinfo{person}{Anupam Chattopadhyay}, {and} \bibinfo{person}{Debdeep
  Mukhopadhyay}.} \bibinfo{year}{2018}\natexlab{}.
\newblock \showarticletitle{Adversarial attacks and defences: A survey}.
\newblock \bibinfo{journal}{\emph{arXiv preprint arXiv:1810.00069}}
  (\bibinfo{year}{2018}).
\newblock


\bibitem[Chen et~al\mbox{.}(2022)]%
        {chen2022knowledge}
\bibfield{author}{\bibinfo{person}{Jingfan Chen}, \bibinfo{person}{Wenqi Fan},
  \bibinfo{person}{Guanghui Zhu}, \bibinfo{person}{Xiangyu Zhao},
  \bibinfo{person}{Chunfeng Yuan}, \bibinfo{person}{Qing Li}, {and}
  \bibinfo{person}{Yihua Huang}.} \bibinfo{year}{2022}\natexlab{}.
\newblock \showarticletitle{Knowledge-enhanced Black-box Attacks for
  Recommendations}. In \bibinfo{booktitle}{\emph{Proceedings of the 28th ACM
  SIGKDD Conference on Knowledge Discovery and Data Mining}}.
  \bibinfo{pages}{108--117}.
\newblock


\bibitem[Fan et~al\mbox{.}(2021)]%
        {fan2021attacking}
\bibfield{author}{\bibinfo{person}{Wenqi Fan}, \bibinfo{person}{Tyler Derr},
  \bibinfo{person}{Xiangyu Zhao}, \bibinfo{person}{Yao Ma},
  \bibinfo{person}{Hui Liu}, \bibinfo{person}{Jianping Wang},
  \bibinfo{person}{Jiliang Tang}, {and} \bibinfo{person}{Qing Li}.}
  \bibinfo{year}{2021}\natexlab{}.
\newblock \showarticletitle{Attacking black-box recommendations via copying
  cross-domain user profiles}. In \bibinfo{booktitle}{\emph{2021 IEEE 37th
  International Conference on Data Engineering (ICDE)}}. IEEE,
  \bibinfo{pages}{1583--1594}.
\newblock


\bibitem[Fan et~al\mbox{.}(2019)]%
        {fan2019graph}
\bibfield{author}{\bibinfo{person}{Wenqi Fan}, \bibinfo{person}{Yao Ma},
  \bibinfo{person}{Qing Li}, \bibinfo{person}{Yuan He}, \bibinfo{person}{Eric
  Zhao}, \bibinfo{person}{Jiliang Tang}, {and} \bibinfo{person}{Dawei Yin}.}
  \bibinfo{year}{2019}\natexlab{}.
\newblock \showarticletitle{Graph neural networks for social recommendation}.
  In \bibinfo{booktitle}{\emph{The world wide web conference}}.
  \bibinfo{pages}{417--426}.
\newblock


\bibitem[Fan et~al\mbox{.}(2022)]%
        {fan2022comprehensive}
\bibfield{author}{\bibinfo{person}{Wenqi Fan}, \bibinfo{person}{Xiangyu Zhao},
  \bibinfo{person}{Xiao Chen}, \bibinfo{person}{Jingran Su},
  \bibinfo{person}{Jingtong Gao}, \bibinfo{person}{Lin Wang},
  \bibinfo{person}{Qidong Liu}, \bibinfo{person}{Yiqi Wang},
  \bibinfo{person}{Han Xu}, \bibinfo{person}{Lei Chen}, {et~al\mbox{.}}}
  \bibinfo{year}{2022}\natexlab{}.
\newblock \showarticletitle{A Comprehensive Survey on Trustworthy Recommender
  Systems}.
\newblock \bibinfo{journal}{\emph{arXiv preprint arXiv:2209.10117}}
  (\bibinfo{year}{2022}).
\newblock


\bibitem[Fang et~al\mbox{.}(2018)]%
        {fang2018poisoning}
\bibfield{author}{\bibinfo{person}{Minghong Fang}, \bibinfo{person}{Guolei
  Yang}, \bibinfo{person}{Neil~Zhenqiang Gong}, {and} \bibinfo{person}{Jia
  Liu}.} \bibinfo{year}{2018}\natexlab{}.
\newblock \showarticletitle{Poisoning attacks to graph-based recommender
  systems}. In \bibinfo{booktitle}{\emph{Proceedings of the 34th Annual
  Computer Security Applications Conference}}. \bibinfo{pages}{381--392}.
\newblock


\bibitem[Ge et~al\mbox{.}(2022)]%
        {ge2022survey}
\bibfield{author}{\bibinfo{person}{Yingqiang Ge}, \bibinfo{person}{Shuchang
  Liu}, \bibinfo{person}{Zuohui Fu}, \bibinfo{person}{Juntao Tan},
  \bibinfo{person}{Zelong Li}, \bibinfo{person}{Shuyuan Xu},
  \bibinfo{person}{Yunqi Li}, \bibinfo{person}{Yikun Xian}, {and}
  \bibinfo{person}{Yongfeng Zhang}.} \bibinfo{year}{2022}\natexlab{}.
\newblock \showarticletitle{A survey on trustworthy recommender systems}.
\newblock \bibinfo{journal}{\emph{arXiv preprint arXiv:2207.12515}}
  (\bibinfo{year}{2022}).
\newblock


\bibitem[Gong et~al\mbox{.}(2020)]%
        {gong2020model}
\bibfield{author}{\bibinfo{person}{Xueluan Gong}, \bibinfo{person}{Qian Wang},
  \bibinfo{person}{Yanjiao Chen}, \bibinfo{person}{Wang Yang}, {and}
  \bibinfo{person}{Xinchang Jiang}.} \bibinfo{year}{2020}\natexlab{}.
\newblock \showarticletitle{Model extraction attacks and defenses on
  cloud-based machine learning models}.
\newblock \bibinfo{journal}{\emph{IEEE Communications Magazine}}
  \bibinfo{volume}{58}, \bibinfo{number}{12} (\bibinfo{year}{2020}),
  \bibinfo{pages}{83--89}.
\newblock


\bibitem[Goodfellow et~al\mbox{.}(2014)]%
        {goodfellow2014explaining}
\bibfield{author}{\bibinfo{person}{Ian~J Goodfellow}, \bibinfo{person}{Jonathon
  Shlens}, {and} \bibinfo{person}{Christian Szegedy}.}
  \bibinfo{year}{2014}\natexlab{}.
\newblock \showarticletitle{Explaining and harnessing adversarial examples}.
\newblock \bibinfo{journal}{\emph{arXiv preprint arXiv:1412.6572}}
  (\bibinfo{year}{2014}).
\newblock


\bibitem[Huang et~al\mbox{.}(2021)]%
        {huang2021data}
\bibfield{author}{\bibinfo{person}{Hai Huang}, \bibinfo{person}{Jiaming Mu},
  \bibinfo{person}{Neil~Zhenqiang Gong}, \bibinfo{person}{Qi Li},
  \bibinfo{person}{Bin Liu}, {and} \bibinfo{person}{Mingwei Xu}.}
  \bibinfo{year}{2021}\natexlab{}.
\newblock \showarticletitle{Data Poisoning Attacks to Deep Learning Based
  Recommender Systems}.
\newblock \bibinfo{journal}{\emph{arXiv preprint arXiv:2101.02644}}
  (\bibinfo{year}{2021}).
\newblock


\bibitem[Juuti et~al\mbox{.}(2019)]%
        {juuti2019prada}
\bibfield{author}{\bibinfo{person}{Mika Juuti}, \bibinfo{person}{Sebastian
  Szyller}, \bibinfo{person}{Samuel Marchal}, {and} \bibinfo{person}{N
  Asokan}.} \bibinfo{year}{2019}\natexlab{}.
\newblock \showarticletitle{PRADA: protecting against DNN model stealing
  attacks}. In \bibinfo{booktitle}{\emph{2019 IEEE European Symposium on
  Security and Privacy (EuroS\&P)}}. IEEE, \bibinfo{pages}{512--527}.
\newblock


\bibitem[Kariyappa and Qureshi(2020)]%
        {kariyappa2020defending}
\bibfield{author}{\bibinfo{person}{Sanjay Kariyappa} {and}
  \bibinfo{person}{Moinuddin~K Qureshi}.} \bibinfo{year}{2020}\natexlab{}.
\newblock \showarticletitle{Defending against model stealing attacks with
  adaptive misinformation}. In \bibinfo{booktitle}{\emph{Proceedings of the
  IEEE/CVF Conference on Computer Vision and Pattern Recognition}}.
  \bibinfo{pages}{770--778}.
\newblock


\bibitem[Kipf and Welling(2016)]%
        {kipf2016semi}
\bibfield{author}{\bibinfo{person}{Thomas~N Kipf} {and} \bibinfo{person}{Max
  Welling}.} \bibinfo{year}{2016}\natexlab{}.
\newblock \showarticletitle{Semi-supervised classification with graph
  convolutional networks}.
\newblock \bibinfo{journal}{\emph{arXiv preprint arXiv:1609.02907}}
  (\bibinfo{year}{2016}).
\newblock


\bibitem[Lee et~al\mbox{.}(2019)]%
        {lee2019defending}
\bibfield{author}{\bibinfo{person}{Taesung Lee}, \bibinfo{person}{Benjamin
  Edwards}, \bibinfo{person}{Ian Molloy}, {and} \bibinfo{person}{Dong Su}.}
  \bibinfo{year}{2019}\natexlab{}.
\newblock \showarticletitle{Defending against neural network model stealing
  attacks using deceptive perturbations}. In \bibinfo{booktitle}{\emph{2019
  IEEE Security and Privacy Workshops (SPW)}}. IEEE, \bibinfo{pages}{43--49}.
\newblock


\bibitem[Li et~al\mbox{.}(2016)]%
        {li2016data}
\bibfield{author}{\bibinfo{person}{Bo Li}, \bibinfo{person}{Yining Wang},
  \bibinfo{person}{Aarti Singh}, {and} \bibinfo{person}{Yevgeniy Vorobeychik}.}
  \bibinfo{year}{2016}\natexlab{}.
\newblock \showarticletitle{Data poisoning attacks on factorization-based
  collaborative filtering}.
\newblock \bibinfo{journal}{\emph{arXiv preprint arXiv:1608.08182}}
  (\bibinfo{year}{2016}).
\newblock


\bibitem[Li et~al\mbox{.}(2017)]%
        {li2017neural}
\bibfield{author}{\bibinfo{person}{Jing Li}, \bibinfo{person}{Pengjie Ren},
  \bibinfo{person}{Zhumin Chen}, \bibinfo{person}{Zhaochun Ren},
  \bibinfo{person}{Tao Lian}, {and} \bibinfo{person}{Jun Ma}.}
  \bibinfo{year}{2017}\natexlab{}.
\newblock \showarticletitle{Neural attentive session-based recommendation}. In
  \bibinfo{booktitle}{\emph{Proceedings of the 2017 ACM on Conference on
  Information and Knowledge Management}}. \bibinfo{pages}{1419--1428}.
\newblock


\bibitem[Lin et~al\mbox{.}(2020)]%
        {lin2020attacking}
\bibfield{author}{\bibinfo{person}{Chen Lin}, \bibinfo{person}{Si Chen},
  \bibinfo{person}{Hui Li}, \bibinfo{person}{Yanghua Xiao},
  \bibinfo{person}{Lianyun Li}, {and} \bibinfo{person}{Qian Yang}.}
  \bibinfo{year}{2020}\natexlab{}.
\newblock \showarticletitle{Attacking Recommender Systems with Augmented User
  Profiles}. In \bibinfo{booktitle}{\emph{Proceedings of the 29th ACM
  International Conference on Information \& Knowledge Management}}.
  \bibinfo{pages}{855--864}.
\newblock


\bibitem[Lin et~al\mbox{.}(2022)]%
        {lin2022shilling}
\bibfield{author}{\bibinfo{person}{Chen Lin}, \bibinfo{person}{Si Chen},
  \bibinfo{person}{Meifang Zeng}, \bibinfo{person}{Sheng Zhang},
  \bibinfo{person}{Min Gao}, {and} \bibinfo{person}{Hui Li}.}
  \bibinfo{year}{2022}\natexlab{}.
\newblock \showarticletitle{Shilling Black-Box Recommender Systems by Learning
  to Generate Fake User Profiles}.
\newblock \bibinfo{journal}{\emph{IEEE Transactions on Neural Networks and
  Learning Systems}} (\bibinfo{year}{2022}).
\newblock


\bibitem[Oliynyk et~al\mbox{.}(2022)]%
        {oliynyk2022know}
\bibfield{author}{\bibinfo{person}{Daryna Oliynyk}, \bibinfo{person}{Rudolf
  Mayer}, {and} \bibinfo{person}{Andreas Rauber}.}
  \bibinfo{year}{2022}\natexlab{}.
\newblock \showarticletitle{I Know What You Trained Last Summer: A Survey on
  Stealing Machine Learning Models and Defences}.
\newblock \bibinfo{journal}{\emph{arXiv preprint arXiv:2206.08451}}
  (\bibinfo{year}{2022}).
\newblock


\bibitem[Orekondy et~al\mbox{.}(2020)]%
        {orekondy2020prediction}
\bibfield{author}{\bibinfo{person}{Tribhuvanesh Orekondy},
  \bibinfo{person}{Bernt Schiele}, {and} \bibinfo{person}{Mario Fritz}.}
  \bibinfo{year}{2020}\natexlab{}.
\newblock \showarticletitle{Prediction poisoning: Utility-constrained defenses
  against model stealing attacks}. In \bibinfo{booktitle}{\emph{International
  Conference on Representation Learning (ICLR) 2020}}.
\newblock


\bibitem[Papernot et~al\mbox{.}(2016)]%
        {papernot2016limitations}
\bibfield{author}{\bibinfo{person}{Nicolas Papernot}, \bibinfo{person}{Patrick
  McDaniel}, \bibinfo{person}{Somesh Jha}, \bibinfo{person}{Matt Fredrikson},
  \bibinfo{person}{Z~Berkay Celik}, {and} \bibinfo{person}{Ananthram Swami}.}
  \bibinfo{year}{2016}\natexlab{}.
\newblock \showarticletitle{The limitations of deep learning in adversarial
  settings}. In \bibinfo{booktitle}{\emph{2016 IEEE European symposium on
  security and privacy (EuroS\&P)}}. IEEE, \bibinfo{pages}{372--387}.
\newblock


\bibitem[Samangouei et~al\mbox{.}(2018)]%
        {samangouei2018defense}
\bibfield{author}{\bibinfo{person}{Pouya Samangouei}, \bibinfo{person}{Maya
  Kabkab}, {and} \bibinfo{person}{Rama Chellappa}.}
  \bibinfo{year}{2018}\natexlab{}.
\newblock \showarticletitle{Defense-gan: Protecting classifiers against
  adversarial attacks using generative models}.
\newblock \bibinfo{journal}{\emph{arXiv preprint arXiv:1805.06605}}
  (\bibinfo{year}{2018}).
\newblock


\bibitem[Schafer et~al\mbox{.}(2001)]%
        {schafer2001commerce}
\bibfield{author}{\bibinfo{person}{J~Ben Schafer}, \bibinfo{person}{Joseph~A
  Konstan}, {and} \bibinfo{person}{John Riedl}.}
  \bibinfo{year}{2001}\natexlab{}.
\newblock \showarticletitle{E-commerce recommendation applications}.
\newblock \bibinfo{journal}{\emph{Data mining and knowledge discovery}}
  \bibinfo{volume}{5} (\bibinfo{year}{2001}), \bibinfo{pages}{115--153}.
\newblock


\bibitem[Shafieinejad et~al\mbox{.}(2021)]%
        {shafieinejad2021robustness}
\bibfield{author}{\bibinfo{person}{Masoumeh Shafieinejad},
  \bibinfo{person}{Nils Lukas}, \bibinfo{person}{Jiaqi Wang},
  \bibinfo{person}{Xinda Li}, {and} \bibinfo{person}{Florian Kerschbaum}.}
  \bibinfo{year}{2021}\natexlab{}.
\newblock \showarticletitle{On the robustness of backdoor-based watermarking in
  deep neural networks}. In \bibinfo{booktitle}{\emph{Proceedings of the 2021
  ACM Workshop on Information Hiding and Multimedia Security}}.
  \bibinfo{pages}{177--188}.
\newblock


\bibitem[Sharaf et~al\mbox{.}(2022)]%
        {sharaf2022survey}
\bibfield{author}{\bibinfo{person}{Marwa Sharaf}, \bibinfo{person}{Ezz El-Din
  Hemdan}, \bibinfo{person}{Ayman El-Sayed}, {and} \bibinfo{person}{Nirmeen~A
  El-Bahnasawy}.} \bibinfo{year}{2022}\natexlab{}.
\newblock \showarticletitle{A survey on recommendation systems for financial
  services}.
\newblock \bibinfo{journal}{\emph{Multimedia Tools and Applications}}
  \bibinfo{volume}{81}, \bibinfo{number}{12} (\bibinfo{year}{2022}),
  \bibinfo{pages}{16761--16781}.
\newblock


\bibitem[Song et~al\mbox{.}(2020)]%
        {song2020poisonrec}
\bibfield{author}{\bibinfo{person}{Junshuai Song}, \bibinfo{person}{Zhao Li},
  \bibinfo{person}{Zehong Hu}, \bibinfo{person}{Yucheng Wu},
  \bibinfo{person}{Zhenpeng Li}, \bibinfo{person}{Jian Li}, {and}
  \bibinfo{person}{Jun Gao}.} \bibinfo{year}{2020}\natexlab{}.
\newblock \showarticletitle{Poisonrec: an adaptive data poisoning framework for
  attacking black-box recommender systems}. In \bibinfo{booktitle}{\emph{2020
  IEEE 36th International Conference on Data Engineering (ICDE)}}. IEEE,
  \bibinfo{pages}{157--168}.
\newblock


\bibitem[Sun et~al\mbox{.}(2019)]%
        {sun2019bert4rec}
\bibfield{author}{\bibinfo{person}{Fei Sun}, \bibinfo{person}{Jun Liu},
  \bibinfo{person}{Jian Wu}, \bibinfo{person}{Changhua Pei},
  \bibinfo{person}{Xiao Lin}, \bibinfo{person}{Wenwu Ou}, {and}
  \bibinfo{person}{Peng Jiang}.} \bibinfo{year}{2019}\natexlab{}.
\newblock \showarticletitle{BERT4Rec: Sequential recommendation with
  bidirectional encoder representations from transformer}. In
  \bibinfo{booktitle}{\emph{Proceedings of the 28th ACM international
  conference on information and knowledge management}}.
  \bibinfo{pages}{1441--1450}.
\newblock


\bibitem[Szegedy et~al\mbox{.}(2013)]%
        {szegedy2013intriguing}
\bibfield{author}{\bibinfo{person}{Christian Szegedy},
  \bibinfo{person}{Wojciech Zaremba}, \bibinfo{person}{Ilya Sutskever},
  \bibinfo{person}{Joan Bruna}, \bibinfo{person}{Dumitru Erhan},
  \bibinfo{person}{Ian Goodfellow}, {and} \bibinfo{person}{Rob Fergus}.}
  \bibinfo{year}{2013}\natexlab{}.
\newblock \showarticletitle{Intriguing properties of neural networks}.
\newblock \bibinfo{journal}{\emph{arXiv preprint arXiv:1312.6199}}
  (\bibinfo{year}{2013}).
\newblock


\bibitem[Wang et~al\mbox{.}(2021)]%
        {wang2021fast}
\bibfield{author}{\bibinfo{person}{Qinyong Wang}, \bibinfo{person}{Hongzhi
  Yin}, \bibinfo{person}{Tong Chen}, \bibinfo{person}{Junliang Yu},
  \bibinfo{person}{Alexander Zhou}, {and} \bibinfo{person}{Xiangliang Zhang}.}
  \bibinfo{year}{2021}\natexlab{}.
\newblock \showarticletitle{Fast-adapting and privacy-preserving federated
  recommender system}.
\newblock \bibinfo{journal}{\emph{The VLDB Journal}} (\bibinfo{year}{2021}),
  \bibinfo{pages}{1--20}.
\newblock


\bibitem[Xu et~al\mbox{.}(2021)]%
        {xu2021watermarking}
\bibfield{author}{\bibinfo{person}{Jing Xu}, \bibinfo{person}{Stefanos Koffas},
  \bibinfo{person}{Oguzhan Ersoy}, {and} \bibinfo{person}{Stjepan Picek}.}
  \bibinfo{year}{2021}\natexlab{}.
\newblock \showarticletitle{Watermarking graph neural networks based on
  backdoor attacks}.
\newblock \bibinfo{journal}{\emph{arXiv preprint arXiv:2110.11024}}
  (\bibinfo{year}{2021}).
\newblock


\bibitem[Yang et~al\mbox{.}(2017)]%
        {yang2017fake}
\bibfield{author}{\bibinfo{person}{Guolei Yang},
  \bibinfo{person}{Neil~Zhenqiang Gong}, {and} \bibinfo{person}{Ying Cai}.}
  \bibinfo{year}{2017}\natexlab{}.
\newblock \showarticletitle{Fake Co-visitation Injection Attacks to Recommender
  Systems.}. In \bibinfo{booktitle}{\emph{NDSS}}.
\newblock


\bibitem[Yuan et~al\mbox{.}(2023a)]%
        {yuan2023manipulating}
\bibfield{author}{\bibinfo{person}{Wei Yuan}, \bibinfo{person}{Quoc Viet~Hung
  Nguyen}, \bibinfo{person}{Tieke He}, \bibinfo{person}{Liang Chen}, {and}
  \bibinfo{person}{Hongzhi Yin}.} \bibinfo{year}{2023}\natexlab{a}.
\newblock \showarticletitle{Manipulating Federated Recommender Systems:
  Poisoning with Synthetic Users and Its Countermeasures}.
\newblock \bibinfo{journal}{\emph{arXiv preprint arXiv:2304.03054}}
  (\bibinfo{year}{2023}).
\newblock


\bibitem[Yuan et~al\mbox{.}(2023b)]%
        {yuan2023interaction}
\bibfield{author}{\bibinfo{person}{Wei Yuan}, \bibinfo{person}{Chaoqun Yang},
  \bibinfo{person}{Quoc Viet~Hung Nguyen}, \bibinfo{person}{Lizhen Cui},
  \bibinfo{person}{Tieke He}, {and} \bibinfo{person}{Hongzhi Yin}.}
  \bibinfo{year}{2023}\natexlab{b}.
\newblock \showarticletitle{Interaction-level membership inference attack
  against federated recommender systems}.
\newblock \bibinfo{journal}{\emph{arXiv preprint arXiv:2301.10964}}
  (\bibinfo{year}{2023}).
\newblock


\bibitem[Yue et~al\mbox{.}(2021b)]%
        {yue2021overview}
\bibfield{author}{\bibinfo{person}{Wenbin Yue}, \bibinfo{person}{Zidong Wang},
  \bibinfo{person}{Jieyu Zhang}, {and} \bibinfo{person}{Xiaohui Liu}.}
  \bibinfo{year}{2021}\natexlab{b}.
\newblock \showarticletitle{An overview of recommendation techniques and their
  applications in healthcare}.
\newblock \bibinfo{journal}{\emph{IEEE/CAA Journal of Automatica Sinica}}
  \bibinfo{volume}{8}, \bibinfo{number}{4} (\bibinfo{year}{2021}),
  \bibinfo{pages}{701--717}.
\newblock


\bibitem[Yue et~al\mbox{.}(2021a)]%
        {yue2021black}
\bibfield{author}{\bibinfo{person}{Zhenrui Yue}, \bibinfo{person}{Zhankui He},
  \bibinfo{person}{Huimin Zeng}, {and} \bibinfo{person}{Julian McAuley}.}
  \bibinfo{year}{2021}\natexlab{a}.
\newblock \showarticletitle{Black-box attacks on sequential recommenders via
  data-free model extraction}. In \bibinfo{booktitle}{\emph{Proceedings of the
  15th ACM Conference on Recommender Systems}}. \bibinfo{pages}{44--54}.
\newblock


\bibitem[Zeng et~al\mbox{.}(2023)]%
        {zeng2023practical}
\bibfield{author}{\bibinfo{person}{Meifang Zeng}, \bibinfo{person}{Ke Li},
  \bibinfo{person}{Bingchuan Jiang}, \bibinfo{person}{Liujuan Cao}, {and}
  \bibinfo{person}{Hui Li}.} \bibinfo{year}{2023}\natexlab{}.
\newblock \showarticletitle{Practical Cross-system Shilling Attacks with
  Limited Access to Data}.
\newblock \bibinfo{journal}{\emph{arXiv preprint arXiv:2302.07145}}
  (\bibinfo{year}{2023}).
\newblock


\bibitem[Zhang et~al\mbox{.}(2021a)]%
        {zhang2021membership}
\bibfield{author}{\bibinfo{person}{Minxing Zhang}, \bibinfo{person}{Zhaochun
  Ren}, \bibinfo{person}{Zihan Wang}, \bibinfo{person}{Pengjie Ren},
  \bibinfo{person}{Zhunmin Chen}, \bibinfo{person}{Pengfei Hu}, {and}
  \bibinfo{person}{Yang Zhang}.} \bibinfo{year}{2021}\natexlab{a}.
\newblock \showarticletitle{Membership inference attacks against recommender
  systems}. In \bibinfo{booktitle}{\emph{Proceedings of the 2021 ACM SIGSAC
  Conference on Computer and Communications Security}}.
  \bibinfo{pages}{864--879}.
\newblock


\bibitem[Zhang et~al\mbox{.}(2021b)]%
        {zhang2021graph}
\bibfield{author}{\bibinfo{person}{Shijie Zhang}, \bibinfo{person}{Hongzhi
  Yin}, \bibinfo{person}{Tong Chen}, \bibinfo{person}{Zi Huang},
  \bibinfo{person}{Lizhen Cui}, {and} \bibinfo{person}{Xiangliang Zhang}.}
  \bibinfo{year}{2021}\natexlab{b}.
\newblock \showarticletitle{Graph embedding for recommendation against
  attribute inference attacks}. In \bibinfo{booktitle}{\emph{Proceedings of the
  Web Conference 2021}}. \bibinfo{pages}{3002--3014}.
\newblock


\bibitem[Zhang et~al\mbox{.}(2022)]%
        {zhang2022pipattack}
\bibfield{author}{\bibinfo{person}{Shijie Zhang}, \bibinfo{person}{Hongzhi
  Yin}, \bibinfo{person}{Tong Chen}, \bibinfo{person}{Zi Huang},
  \bibinfo{person}{Quoc Viet~Hung Nguyen}, {and} \bibinfo{person}{Lizhen Cui}.}
  \bibinfo{year}{2022}\natexlab{}.
\newblock \showarticletitle{Pipattack: Poisoning federated recommender systems
  for manipulating item promotion}. In \bibinfo{booktitle}{\emph{Proceedings of
  the Fifteenth ACM International Conference on Web Search and Data Mining}}.
  \bibinfo{pages}{1415--1423}.
\newblock


\bibitem[Zhang et~al\mbox{.}(2023)]%
        {zhang2023comprehensive}
\bibfield{author}{\bibinfo{person}{Shijie Zhang}, \bibinfo{person}{Wei Yuan},
  {and} \bibinfo{person}{Hongzhi Yin}.} \bibinfo{year}{2023}\natexlab{}.
\newblock \showarticletitle{Comprehensive privacy analysis on federated
  recommender system against attribute inference attacks}.
\newblock \bibinfo{journal}{\emph{IEEE Transactions on Knowledge and Data
  Engineering}} (\bibinfo{year}{2023}).
\newblock


\bibitem[Zhang et~al\mbox{.}(2021c)]%
        {zhang2021reverse}
\bibfield{author}{\bibinfo{person}{Yihe Zhang}, \bibinfo{person}{Xu Yuan},
  \bibinfo{person}{Jin Li}, \bibinfo{person}{Jiadong Lou}, \bibinfo{person}{Li
  Chen}, {and} \bibinfo{person}{Nian-Feng Tzeng}.}
  \bibinfo{year}{2021}\natexlab{c}.
\newblock \showarticletitle{Reverse Attack: Black-box Attacks on Collaborative
  Recommendation}. In \bibinfo{booktitle}{\emph{Proceedings of the 2021 ACM
  SIGSAC Conference on Computer and Communications Security}}.
  \bibinfo{pages}{51--68}.
\newblock


\end{thebibliography}
\end{document}